\theoremstyle{definition}
\newtheorem{definition}{Definition}
\newtheorem{assume}{Assumption}
\newtheorem*{remark}{Remark}
\newcommand\FUNCTION[3][default]{%
  \ALC@it
  \algorithmicprocedure\ \textsc{#2}(#3)%
  \ALC@com{#1}%
  \begin{ALC@prc}%
}
\newcommand\ENDFUNCTION{%
  \end{ALC@prc}%
  \ifthenelse{\boolean{ALC@noend}}{}{%
    \ALC@it\algorithmicendprocedure
  }%
}
\newenvironment{ALC@prc}{\begin{ALC@g}}{\end{ALC@g}}
\DeclareMathOperator*{\argmin}{arg\,min}
\title{Hybrid Memory Replay: Blending Real and Distilled Data for Class Incremental Learning}
\author{
 Jiangtao Kong \\
 Department of Computer Science \\
  William \& Mary \\
  \texttt{jkong01@email.wm.edu} \\
   \And
 Jiacheng Shi \\
  Department of Computer Science \\
  William \& Mary \\
  \texttt{jshi12@wm.edu} \\
  \And
  Ashley Gao \\
  Department of Computer Science \\
  William \& Mary \\
  \texttt{ygao18@wm.edu} \\
  \And
  Shaohan Hu \\
  JPMorgan Chase  \\
  \texttt{shaohan.hu@jpmchase.com} \\
  \And
  Tianyi Zhou \\
  Department of Computer Science \\
  University of Maryland, College Park \\
  \texttt{zhou@umiacs.umd.edu} \\
  \And
  Huajie Shao \\
 Department of Computer Science \\
  William \& Mary \\
  \texttt{hshao@wm.edu} \\
}
\begin{document}
\maketitle
\begin{abstract}
Incremental learning (IL) aims to acquire new knowledge from current tasks while retaining knowledge learned from previous tasks. Replay-based IL methods store a set of exemplars from previous tasks in a buffer and replay them when learning new tasks. However, there is usually a size-limited buffer that cannot store adequate real exemplars to retain the knowledge of previous tasks. In contrast, data distillation (DD) can reduce the exemplar buffer's size, by condensing a large real dataset into a much smaller set of more information-compact synthetic exemplars. Nevertheless, DD's performance gain on IL quickly vanishes as the number of synthetic exemplars grows. 
To overcome the weaknesses of real-data and synthetic-data buffers, we instead optimize a hybrid memory including both types of data. Specifically, we propose an innovative modification to DD that distills synthetic data from a sliding window of checkpoints in history (rather than checkpoints on multiple training trajectories).  
Conditioned on the synthetic data, we then optimize the selection of real exemplars to provide complementary improvement to the DD objective.
The optimized hybrid memory combines the strengths of synthetic and real exemplars, effectively mitigating catastrophic forgetting in Class IL (CIL) when the buffer size for exemplars is limited. 
Notably, our method can be seamlessly integrated into most existing replay-based CIL models. Extensive experiments across multiple benchmarks demonstrate that our method significantly outperforms existing replay-based baselines.
\end{abstract}


\section{Introduction}\label{sec:intro}
Incremental Learning (IL)~\cite{wu2019large,gepperth2016incremental,douillard2022dytox,xie2022general} is an emerging technique that emulates the human ability to continuously acquire new knowledge in an ever-changing world. However, the key challenge is that IL always suffers from \textit{catastrophic forgetting}~\cite{mccloskey1989catastrophic}, which means the model drastically forgets previously acquired information upon learning new tasks. Thus, how to mitigate catastrophic forgetting in IL remains a significant challenge.

In this work, we primarily focus on a particularly challenging scenario known as class incremental learning (CIL)~\cite{douillard2020podnet,zhu2021prototype,wang2022foster,wang2022beef}. CIL aims to learn new and disjoint classes across successive task phases and then accurately predicts all classes observed thus far in each phase, without prior knowledge of task identities. A common approach to enhance CIL performance is the use of replay, which stores a subset of exemplars from old tasks and revisits them when learning new tasks. Storing real exemplars can significantly enhance the performance of CIL~\cite{rebuffi2017icarl,chaudhry2019continual}; however, the performance will be affected by the limitation of the exemplar buffer size. To decrease the dataset size, recent studies have explored data distillation (DD)~\cite{loo2022efficient,du2023minimizing} to distill compact \textit{synthetic exemplars} from real datasets. However, \cite{yu2023dataset} indicates that the effectiveness of synthetic exemplars declines as their quantity increases. Motivated by these findings, we aim to develop a new hybrid memory that combines the advantages of both real and synthetic exemplars for replay-based CIL methods.

\begin{wrapfigure}[26]{r}{0.48\textwidth}
  \begin{center}
    \includegraphics[width=0.48\textwidth]{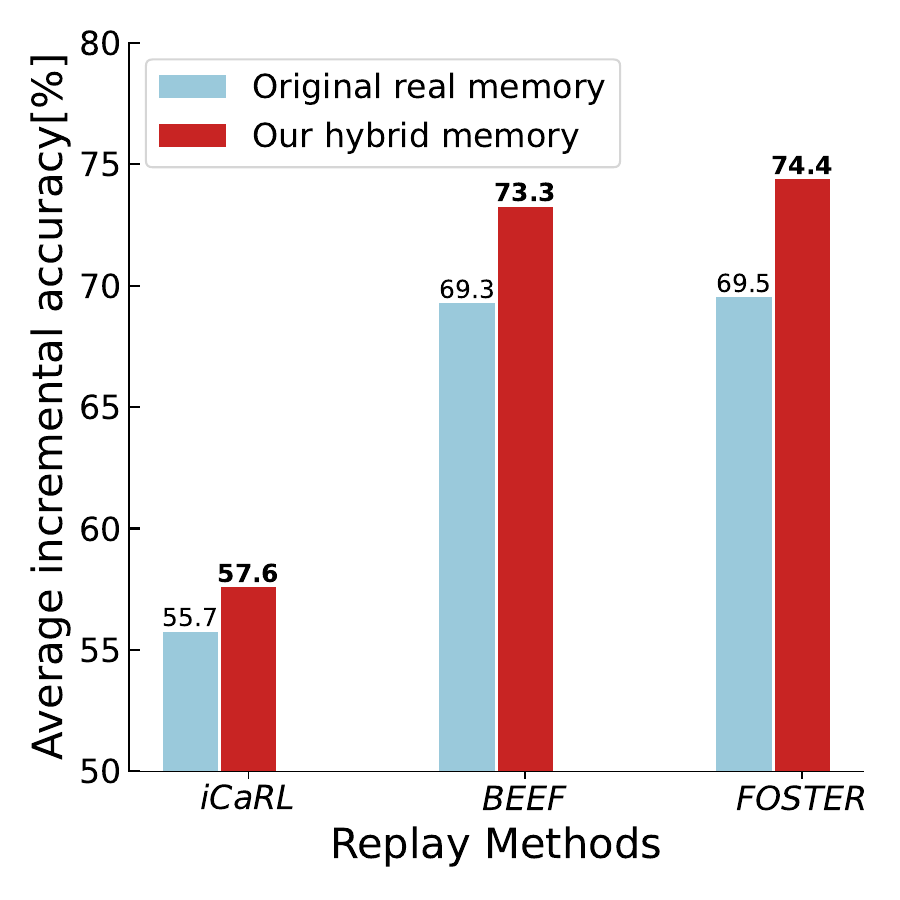}
  \end{center}
  \vspace{-0.2in}
  \caption{Performance comparison between the real memory as used by the original CIL methods and our hybrid memory across multiple baselines: iCaRL~\cite{rebuffi2017icarl}, BEEF~\cite{wang2022beef}, and FOSTER~\cite{wang2022foster}, all with the same exemplar buffer size on CIFAR-100~\cite{krizhevsky2009learning}. 
  }
  \label{fig:toy_perform}
\end{wrapfigure}

In this paper, we introduce a novel hybrid memory that optimizes a limited number of both synthetic and real exemplars for replay-based CIL methods. A key challenge is enabling these limited exemplars to capture adequate information from data samples of previous tasks.
To address this challenge, we aim to introduce data distillation (DD) into CIL, which condenses large datasets into a small set of representative samples. However, classical DD techniques are not directly applicable to replay-based CIL due to the requirement of checkpoints
on multiple training trajectories. To address this challenge, we first propose an innovative Continual Data Distillation (CDD) method that adapts DD for replay-based CIL, extracting informative synthetic exemplars from previous tasks' datasets.
Interestingly, experiments in Sec.~\ref{sec:motivation} across different limited exemplar buffer sizes reveal that while memory relying solely on synthetic exemplars initially outperforms memory using only real exemplars, its performance becomes less effective than the memory only with real exemplars as the buffer size increases.
To address this issue, we further develop conditional real data selection that chooses optimal real samples to complement synthetic data. As a result, the interplay between selected real exemplars and synthetic data enhances model performance within a limited exemplar buffer size. As illustrated in Fig.~\ref{fig:toy_perform}, our proposed hybrid memory significantly outperforms the original real exemplars across various replay-based CIL methods. Finally, in Sec.~\ref{sec:experiment}, we evaluate our approach by integrating it into several existing replay-based CIL models and compare their performance against baseline models. The results validate that our hybrid memory consistently leads to superior performance.

\textbf{Our contributions are four-fold:} 1) to our best knowledge, we are the first to synergize limited synthetic and real exemplars to boost replay-based CIL performance; 2) we develop a novel conditional real data selection that optimally chooses real exemplars, which can complement the synthetic data effectively; 3) our approach can be inserted into many existing replay-based CIL models to improve their performance; and 4) extensive experiments demonstrate that our method can significantly outperform existing replay-based CIL approaches with a limited number of saved exemplars.


\section{Related Work}\label{sec:relatedwork}
\noindent \textbf{Class Incremental Learning.} Existing works of CIL can be categorized into three types: regularization-based methods, architecture-based methods, and replay-based methods. 

\noindent (i) \textbf{\textit{Regularization-based methods}} seek to mitigate catastrophic forgetting by incorporating explicit regularization terms to balance the knowledge from old and new tasks~\cite{li2017learning,kirkpatrick2017overcoming, lee2017overcoming,liu2018rotate,ahn2019uncertainty}. However, only using regularization-based methods often proves inadequate for preventing catastrophic forgetting. Therefore, these methods often collaborate with the replay-based methods to boost performance. (ii) \textbf{\textit{Architecture-based methods}} try to mitigate catastrophic forgetting by either dynamically expanding the model's parameters to accommodate new tasks~\cite{mallya2018packnet,ahn2019uncertainty,douillard2022dytox} or by learning new tasks in parallel using task-specific sub-networks or modules~\cite{rusu2016progressive,fernando2017pathnet,henning2021posterior}. However, these approaches often require substantial memory to store the expanded network or multiple sub-networks and typically depend on task identities to determine the appropriate parameters or sub-networks for a given task. Consequently, their applicability is significantly limited. (iii) \textbf{\textit{Replay-based methods}} are designed to mitigate catastrophic forgetting by preserving a small subset of old task exemplars for replay while learning new tasks. One line of these methods~\cite{rebuffi2017icarl,douillard2020podnet,wang2022foster,wang2022beef} maintains a limited memory for storing real exemplars from old tasks. However, due to data privacy concerns and inherent memory constraints, the number of real exemplars stored is typically small, significantly affecting performance in applications. Another line~\cite{shin2017continual, ostapenko2019learning, lesort2019generative,rios2018closed} employs generative models, such as Variational Autoencoders (VAE) and Generative Adversarial Networks (GAN), to create synthetic exemplars of old tasks for replay. However, these methods often face challenges with label inconsistency during continual learning~\cite{ayub2020eec,ostapenko2019learning}, and is closely related to continual learning of generative models themselves~\cite{wang2023comprehensive}. In contrast, this paper aims to condense dataset information into a few synthetic exemplars without training additional models.

\noindent \textbf{Data Distillation.} Data distillation (DD) aims to condense the information from a large-scale dataset into a significantly smaller subset. This process ensures that the performance of a model trained on the distilled subset is comparable to that of a model trained on the original dataset. Existing DD methods can be grouped into three categories based on the optimization objectives: \textit{performance matching}~\cite{loo2022efficient,zhou2022dataset}, \textit{parameter matching}~\cite{cazenavette2022dataset,zhao2021dataset,du2023minimizing}, and \textit{distribution matching}~\cite{zhao2023dataset,sajedi2023datadam}. While a few studies~\cite{zhao2023dataset,sajedi2023datadam} have applied DD to a simple specific IL application like GDumb~\cite{prabhu2020gdumb}, they cannot be directly applied to the general replay-based CIL methods. This is because traditional DD techniques require training \textit{a set of models} randomly sampled from an initialization distribution. However, for typical replay-based CIL, the initial model parameters for each task are derived from the weights trained on the previous task. Thus, directly applying existing DD techniques to CIL introduces significant computational costs, as it requires training an additional set of models to get multiple trajectories. To address this issue, we introduce the Continual DD (CDD) technique to suit replay-based CIL.

\section{Why Hybrid Memory?}
\label{sec:motivation}
In this section, we mainly introduce the rationale behind the need for hybrid memory in CIL by comparing real, synthetic, and bybrid memory.

\begin{wrapfigure}[21]{r}{0.48\textwidth}
\vspace{-0.27in}
  \begin{center}
    \includegraphics[width=0.48\textwidth]{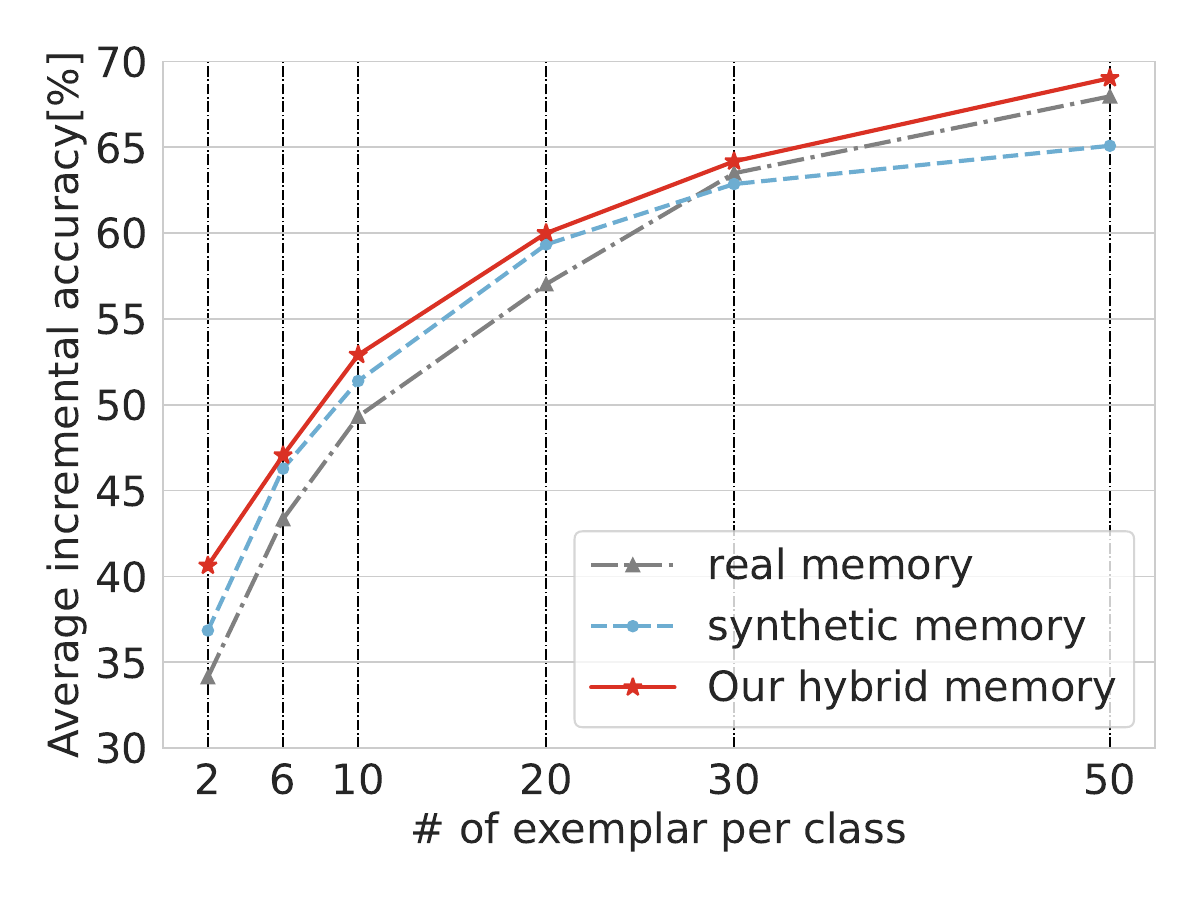}
  \end{center}
  \vspace{-0.18in}
  \caption{Performance evaluation of iCaRL~\cite{rebuffi2017icarl} using different exemplar buffer sizes for real memory, synthetic memory, and our hybrid memory. ``real memory'' refers to buffers containing only real exemplars selected by iCaRL. ``synthetic memory'' contains only synthetic exemplars generated by CDD.}
  \label{fig:motivation}
\end{wrapfigure}
To address catastrophic forgetting in CIL, general replay-based methods attempt to store real exemplars from previous tasks in a limited exemplar buffer. Intuitively, saving high-quality exemplars from old tasks can enhance the performance of replay-based CIL. However, a size-limited exemplar buffer cannot store adequate real exemplars to retain the knowledge of previous tasks. To address this challenge, we introduce a Continual Data Distillation (CDD) to compress important information from each task's dataset into a limited number of synthetic exemplars, replacing the need to store real exemplars in the buffer.

However, synthetic exemplars alone are not always sufficient. As their number increases, performance tends to plateau due to the inevitable information loss that occurs during the distillation process. As shown in Fig.~\ref{fig:motivation} and Appendix~\ref{app:dm_across}, we evaluate various replay-based CIL methods with buffers containing either only real exemplars or only synthetic exemplars generated by CDD across different buffer sizes on CIFAR-100~\cite{krizhevsky2009learning}. We can observe that synthetic memory performs better than real memory when the buffer size is small, suggesting that synthetic exemplars capture more information per exemplar when storage is constrained. However, as the buffer size increases, synthetic exemplars become less effective than real exemplars, likely due to the inherent important information loss in the distillation process. Motivated by these observations, we propose to construct a hybrid memory that combines the conditionally selected real exemplars with the synthetic ones to complement each other. In the following Sec.~\ref{sec:model}, we will detail the proposed method.

\section{Proposed Hybrid Memory}
\label{sec:model}
In this section, we first present the problem of using hybrid memory in replay-based CIL. Then we detail the proposed hybrid memory to mitigate the catastrophic forgetting in CIL.


\subsection{Problem Formulation}
This work aims to build a high-quality hybrid memory to mitigate the catastrophic forgetting problem in CIL, which involves sequentially learning distinct tasks, each consisting of disjoint groups of classes. Specifically, the real dataset for task-$t$ is denoted as $\mathcal{R}_t\triangleq\{(x_i, y_i)\}_{i=1}^{n_t}$, where $y_i$ belongs to the label space $C_t$ of task-$t$, and $n_t$ represents the number of instances. Unlike traditional replay-based methods that store only limited real exemplars, our approach maintains a hybrid memory with a limited number of hybrid exemplars from learned tasks' classes. The hybrid memory is represented as $\mathcal{H}_{1:t-1}\triangleq\bigcup_{i=1}^{t-1}\mathcal{H}_i$, where $\mathcal{H}_{t}\triangleq\{\mathcal{S}_{t}, \mathcal{R}_t[A_t]\}$ represents the hybrid memory for task-$t$. Here, $\mathcal{S}_{t}$ comprises synthetic exemplars for the classes in task-$t$, and
$A_t\subseteq V_t$ ($V_t=\{1,2,\cdots,|\mathcal{R}_t|\}$ is the ground set) indexes a subset of real samples selected from $\mathcal{R}_t$. 
The model $f(x;\theta)$ trained on $\mathcal{H}_{1:t-1}\cup\mathcal{R}_t$ is required to predict probabilities for all former classes, including the current task-$t$, denoted as $C_{1:t}\triangleq\bigcup_{i=1}^{t} C_i$, for any given input $x$.
\textbf{We define the important notations used in this work in Appendix~\ref{app:notation}}.

\subsection{Overview of the Proposed Hybrid Memory}
Fig.~\ref{fig:pipeline}  presents the framework of the proposed hybrid memory system for CIL, comprising two key components: (i) model update and (ii) memory update. The model update refines the CIL model by leveraging a hybrid memory composed of both synthetic and real exemplars. Instead of trivially merging synthetic and real data, our memory update jointly optimizes the two by selecting real exemplars based on the optimized synthetic data. As the memory update is central to our method, we first elaborate on this component below.

\begin{figure*}[!tb]
\vspace{-1em}
\centering
\includegraphics[width=0.99\linewidth]{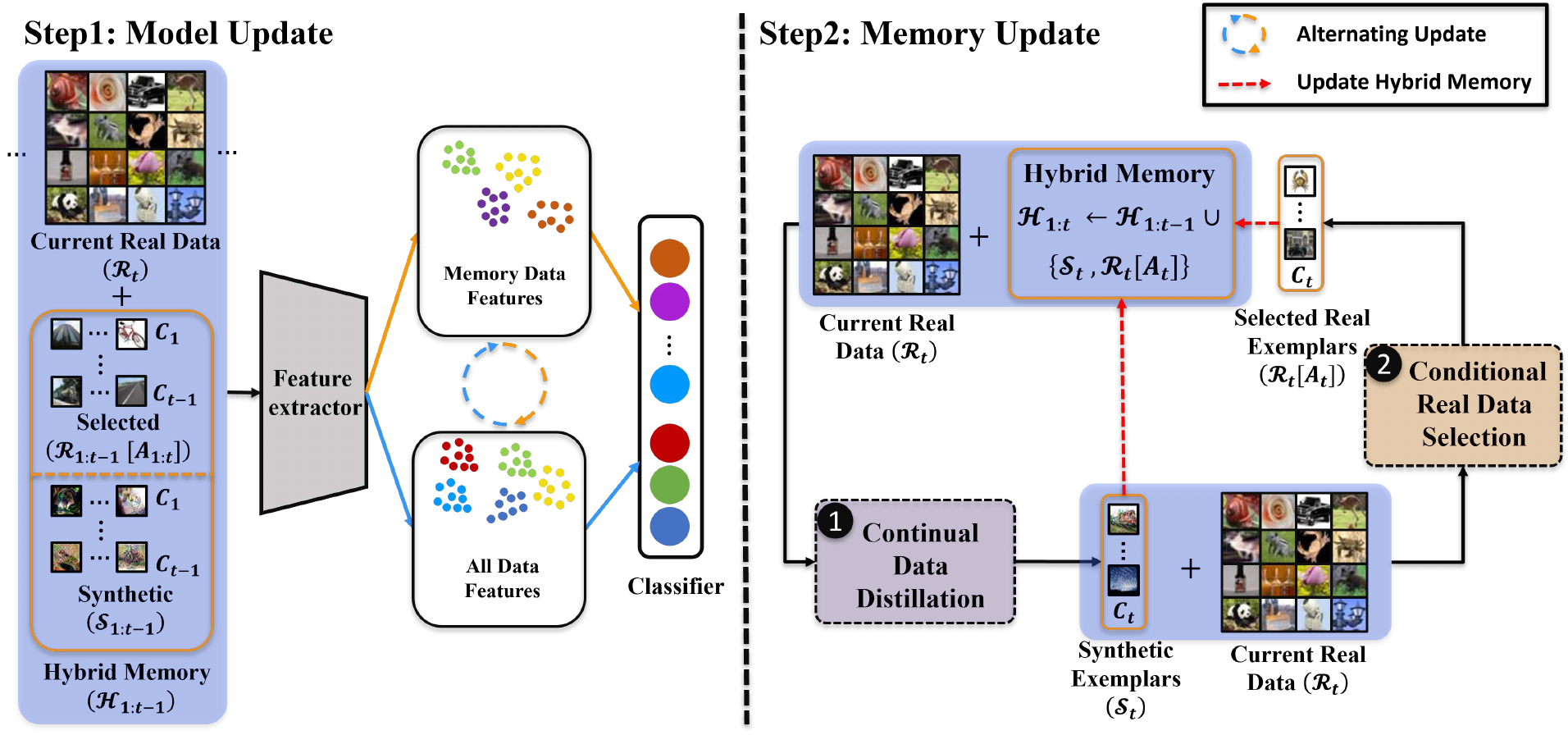}
\vspace{-1em}
\caption{The framework of the proposed hybrid memory system for replay-based CIL. We first leverage the current real data with the hybrid memory for former classes to update the model. Then we use Continual Data Distillation (\ding{182}) to extract synthetic exemplars and conditional real data selection (\ding{183}) to choose optimal exemplars conditioned on synthetic data. Finally, the synthetic exemplars and selected real exemplars are combined to update the hybrid memory.}
\label{fig:pipeline}
\end{figure*}

\subsection{Memory Update}\label{subsec:memory}

According to Section~\ref{sec:motivation}, we propose to obtain an optimal hybrid memory $\mathcal{H}_t^\star$ for task-$t$ by optimizing the following objective,
\begin{equation}\label{equ:hm_obj}
\mathcal{H}_t^{\star}\in\mathop{\arg\min}\limits_{\mathcal{H}_t}\mathcal{L}(\mathcal{H}_{t},\mathcal{R}_t),~~\mathcal{H}_t\triangleq\{\mathcal{S}_t; \mathcal{R}_{t}[A_t]\},
\end{equation}
where $\mathcal{L}(\cdot, \cdot)$ measures the distance between the hybrid memory $\mathcal{H}_{t}$ and the real dataset $\mathcal{R}_t$. The hybrid memory minimizing $\mathcal{L}(\cdot, \cdot)$ is expected to achieve comparable performance as $\mathcal{R}_t$ when used to train the model. 

However, jointly optimizing both the synthetic exemplars $\mathcal{S}_t$ and the subset $A_t$ of real exemplars at the same time is challenging, as the optimization of $\mathcal{S}_t$ is a continuous process while selecting real exemplars $\mathcal{R}_t[A_t]$ requires combinatorial optimization. To address this problem, we adopt an alternating optimization strategy to optimize the hybrid memory, as illustrated on the right side of Fig.~\ref{fig:pipeline}. Below, we detail our method for constructing an effective hybrid memory.

\textbf{\ding{182} Continual Data Distillation (CDD).} 
First, we aim to optimize the synthetic exemplars $\mathcal{S}^{\star}_t$ for hybrid memory. Traditional DD methods can condense dataset information into a few synthetic exemplars, but they cannot be directly applied to replay-based CIL methods. This is because existing DD methods distill synthetic data from multiple training trajectories (with different random initialization) while most CIL methods only produce one trajectory per task.
To address this issue, we introduce a novel Continual Data Distillation (CDD) technique, which distills $\mathcal{S}_t$ from checkpoints on task-$t$'s training trajectory $\theta_{1:N}$, where $N$ represents the total number of cached checkpoints $\theta$ in the history of learning task-$t$. We do not need multiple trajectories because the replay of $\mathcal{S}_t$ in the next task will only start from $\theta_{N}$ instead of random initializations. \looseness-1

Specifically, to distill synthetic exemplars from the real dataset (\ding{182}) after training the model on task-$t$, we will minimize the distance ($\mathcal{L}(\cdot, \cdot)$ in Eq.~\ref{equ:hm_obj}) between synthetic exemplars and the real data from task-$t$. To effectively solve this equation, we propose CDD to adapt the original DD optimization objective into CIL below,
\begin{equation}\label{equ:add_dist}
    \mathcal{S}_t^{\star}\in\argmin_{\mathcal{S}_t}\mathcal{L}(\mathcal{S}_t,\mathcal{R}_t)=\mathop{\mathbb{E}}\limits_{\theta\sim\theta_{1:N}}\left[\ell(\mathcal{S}_t,\mathcal{R}_t;\theta)\right],
\end{equation}

where $\ell(\cdot,\cdot;\cdot)$ in Eq.~\ref{equ:add_dist} is a DD objective from any existing DD methods. It aims to measure the distance between the synthetic exemplars $\mathcal{S}_t$ and the real data $\mathcal{R}_t$ using the sampled model parameters $\theta$.
\textbf{In Appendix~\ref{app:add_optobj}, we present a detailed description of several optimization objectives for CDD in CIL.}

To keep a constant and smaller memory of model checkpoints, we can instead apply a sliding window version of CDD of window size $\tau\ll N$, i.e., updating $\mathcal{S}_{t}$ only based on the most recent $\tau$ checkpoints. Specifically, after each epoch-$j+1 $ of task-$t$, we update the synthetic data $\mathcal{S}_{t}$ as $\mathcal{S}_{t,j+1}$ below.
\begin{equation}\label{equ:add_dist_sw}
    \mathcal{S}_{t,j+1}=\mathcal{S}_{t,j}-\eta\nabla_{\mathcal{S}_{t,j}}\left(\mathop{\mathbb{E}}\limits_{\theta\sim\theta_{j-\tau+2:j+1}}\left[\ell(\mathcal{S}_{t,j},\mathcal{R}_t;\theta)\right]\right),
\end{equation}
where $\eta$ is the learning rate for updating the synthetic exemplars $\mathcal{S}_{t,j+1}$.
Hence, we only need to store $\tau$ checkpoints in any step but the final $\mathcal{S}_{t}$ at the end of each task is a result of DD on all the $N$ checkpoints. The final $\mathcal{S}_t^{\star}\leftarrow\mathcal{S}_{t,N}$. 

\textbf{\ding{183} Conditional Real Data Selection}.
\label{sec:selection}
While synthetic exemplars effectively capture rich information from previous tasks, their impact tends to plateau as their quantity increases (see Sec.~\ref{sec:motivation}). To overcome this limitation, we select a subset of real data complementary to $\mathcal S_t^*$ at the end of each task $t$ to further reduce the gap between the memory $\mathcal H_t$ and $\mathcal R_t$ 
(refer to \ding{183} in Fig.~\ref{fig:pipeline}). Guided by Eq.~\ref{equ:hm_obj}, the objective is to find an optimal subset $A_t^\star\subseteq V_t$ such that the selected real data $\mathcal{R}_t[A_t^\star]$, in conjunction with the distilled synthetic exemplars $\mathcal{S}_t^{\star}$, minimize the DD objective below:
\begin{equation}\label{equ:dm_select}
A_t^{\star}\in\argmin_{A_t\subseteq V_t, |A_t|\leq k}\ell([\mathcal{R}_{t}[A_t]; \mathcal{S}_t^{\star}],\mathcal{R}_t;\theta_t),
\end{equation}
where $\theta_{t}$ denotes the model parameters trained on $\mathcal{R}_t\cup\mathcal{H}_{1:t-1}$, task-$t$'s real data $\mathcal R_t$ combined with the hybrid memory $\mathcal H_{1:t-1}$ for the former classes. The DD optimization objective $\ell(\cdot,\cdot;\cdot)$ is kept consistent with the one in Eq.~\ref{equ:add_dist}.

Since the conditional subset selection problem in Eq.~\ref{equ:dm_select} is an NP-hard combinatorial optimization problem, we apply the greedy algorithm to approximately optimize the subset $A_t$. \looseness-1

\textbf{Algorithm.} To select some ideal real exemplars, we employ a greedy algorithm within our selection. This approach involves iteratively selecting the locally optimal real exemplars from the current task, guided by the objective in Eq.~\ref{equ:dm_select}. The core idea is to approximate a globally optimal selection, designated as $\mathcal{R}_{t}[A_t^\star]$, through these successive local optimizations. \textbf{The detailed algorithm is summarized in Appendix~\ref{app:algo_seect}.}

\textbf{Theoretical Analysis.} 
We also theoretically analyze the performance of the model trained on the proposed hybrid memory, showing that it can achieve performance comparable to a model trained on the real dataset for all tasks in CIL.
We begin our analysis by learning CIL tasks jointly, which serves as the performance benchmark for CIL. In this scenario, the training data for task-$t$ is written as $\mathcal{R}_{1:t}\triangleq\bigcup_{i=1}^{t-1}\mathcal{R}_i\cup\mathcal{R}_t$ and the model for task-$t$ can be obtained by the maximum likelihood estimation below
\begin{equation}\label{equ:r_theta}
    \theta_{\mathcal{R}_{1:t}}=\mathop{\arg\max}\limits_{\theta}(\log{P(\mathcal{R}_{1:t-1}|\theta)}+\log{P(\mathcal{R}_{t}|\theta)}).
\end{equation}

Given that the absence of $\mathcal{R}_{1:t-1}$ leads to catastrophic forgetting in CIL, our objective is to construct a smaller hybrid memory, $\mathcal{H}_{1:t-1}$, with limited exemplars that enable the network to achieve performance comparable to that obtained when trained on $\mathcal{R}_{1:t-1}$.
\begin{definition}\label{def:local_opt_H}
  The optimal model trained on task-$t$'s hybrid memory is $\theta_{\mathcal{H}_t}=\mathop{\arg\max}\limits_{\theta}\log{P(\mathcal{H}_{t}|\theta)}$.
\end{definition}
\begin{definition}\label{def:local_opt_R}
  The optimal model trained on task-$t$'s real data is $\theta_{\mathcal{R}_{t}}=\mathop{\arg\max}\limits_{\theta}\log{P(\mathcal{R}_{t}|\theta)}$.
\end{definition}
\begin{definition}\label{def:golo_opt_H}
  The optimal model trained on the hybrid memory of all prior tasks from 1 to $t$ is denoted by $\theta_{\mathcal{H}_{1:t}}=\mathop{\arg\max}\limits_{\theta}h_t(\theta)=\mathop{\arg\max}\limits_{\theta}\log{P(\mathcal{H}_{1:t}|\theta)}$.
\end{definition}
\begin{definition}\label{def:golo_opt_R}
  The optimal model trained on the real data of all tasks from 1 to $t$ is defined as $\theta_{\mathcal{R}_{1:t}}=\mathop{\arg\max}\limits_{\theta}r_t(\theta)=\mathop{\arg\max}\limits_{\theta}\log{P(\mathcal{R}_{1:t}|\theta)}$.
\end{definition}

\begin{assume}\label{ass:local_bound}
  Based on our objective of the proposed hybrid memory, we assume that the model trained on the hybrid memory for one task can achieve comparable performance to that of the model trained solely on real data for the same task, we can express this formally as $\exists\epsilon_t \in [0,1), \log P(\mathcal{R}_{t+1} | \theta_{\mathcal{H}_{t+1}}) \geq (1-\epsilon_t)\log P(\mathcal{R}_{t+1} | \theta_{\mathcal{R}_{t+1}})$.
\end{assume}
\begin{assume}\label{ass:hist_bound}
  Assume that the performance of the model trained on the hybrid memory of all prior tasks till task-$t$ and the performance of the optimal model trained on the hybrid memory of task-$t+1$ can be bonded as $\exists\rho, \rho*r_{t+1}(\theta_{\mathcal{R}_{1:t+1}})\geq r_{t}(\theta_{\mathcal{H}_{1:t}})+\log P(\mathcal{R}_{t+1} | \theta_{\mathcal{H}_{t+1}})$.
\end{assume}

\begin{restatable}[Performance Approximation]{theorem}{approx}
\label{theo:memory}
Based on the above Assumptions~\ref{ass:local_bound} and~\ref{ass:hist_bound}, when $\epsilon_{t+1}\geq \frac{\rho}{1-\epsilon_t}$, we can derive that the model trained on the hybrid memory of all previous tasks achieves performance comparable to that of the model trained on the real dataset of all previous tasks, 
\begin{equation}\label{equ:performance}
\log{P(\mathcal{R}_{1:t}|\theta_{\mathcal{H}_{1:t}})}\geq(1-\epsilon_t)\log{P(\mathcal{R}_{1:t}|\theta_{\mathcal{R}_{1:t}})}.
\end{equation}
\end{restatable}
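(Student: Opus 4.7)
My plan is to argue by induction on the task index $t$. The base case ($t=1$) reduces to exactly Assumption~\ref{ass:local_bound} applied to the first task, so all of the work lives in the inductive step.

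For the inductive step, assume $r_t(\theta_{\mathcal{H}_{1:t}}) \geq (1-\epsilon_t)\, r_t(\theta_{\mathcal{R}_{1:t}})$ and aim to establish the analogous bound with $t+1$ in place of $t$. The natural decomposition, available under the additivity of log-likelihood across the disjoint tasks that Definitions~\ref{def:local_opt_H}--\ref{def:golo_opt_R} implicitly use, is
\begin{equation*}
r_{t+1}(\theta_{\mathcal{H}_{1:t+1}}) \;=\; r_t(\theta_{\mathcal{H}_{1:t}}) + \log P(\mathcal{R}_{t+1} \mid \theta_{\mathcal{H}_{t+1}}).
\end{equation*}
I would apply the inductive hypothesis to the first summand and Assumption~\ref{ass:local_bound} to the second, yielding a lower bound of the form $(1-\epsilon_t)\bigl[r_t(\theta_{\mathcal{R}_{1:t}}) + \log P(\mathcal{R}_{t+1}\mid\theta_{\mathcal{R}_{t+1}})\bigr]$. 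Then, invoking Assumption~\ref{ass:hist_bound}, I can swap this sum of individual real-data optima for a multiple of the joint optimum $r_{t+1}(\theta_{\mathcal{R}_{1:t+1}})$, at the cost of introducing the constant $\rho$. The final algebraic step is to absorb $(1-\epsilon_t)$ and $\rho$ into a single factor $(1-\epsilon_{t+1})$ on the right: this is precisely where the hypothesis $\epsilon_{t+1} \geq \rho/(1-\epsilon_t)$, equivalent to $(1-\epsilon_t)\,\epsilon_{t+1} \geq \rho$, gets used to close the recursion.

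The main obstacle will be handling Assumption~\ref{ass:hist_bound} carefully, since it mixes quantities evaluated at several different optima, $\theta_{\mathcal{H}_{1:t}}$, $\theta_{\mathcal{H}_{t+1}}$, and $\theta_{\mathcal{R}_{1:t+1}}$. To chain it with the inductive computation I will need to either (i) identify $r_{t+1}(\theta_{\mathcal{H}_{1:t+1}})$ with $r_t(\theta_{\mathcal{H}_{1:t}}) + \log P(\mathcal{R}_{t+1}\mid\theta_{\mathcal{H}_{t+1}})$, i.e., the task-wise decomposition already implicit in Assumption~\ref{ass:hist_bound}, or (ii) control the gap between this sum of task-wise optima and the true joint optimum. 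I will also have to track signs carefully throughout, since multiplying a signed log-likelihood by a factor in $(0,1]$ can either tighten or weaken an inequality; that bookkeeping, and the matching between the hybrid-memory terms appearing in Assumption~\ref{ass:hist_bound} and the real-data terms produced by the inductive hypothesis, is the most error-prone part of the argument.
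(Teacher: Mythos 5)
Your high-level plan (induction, base case from Assumption~\ref{ass:local_bound}, a per-task decomposition, then Assumptions~\ref{ass:local_bound}--\ref{ass:hist_bound} plus the condition $\epsilon_{t+1}\geq\rho/(1-\epsilon_t)$) matches the paper's skeleton, but the inductive step as you describe it has a genuine gap. The decomposition $r_{t+1}(\theta_{\mathcal{H}_{1:t+1}})=r_t(\theta_{\mathcal{H}_{1:t}})+\log P(\mathcal{R}_{t+1}\mid\theta_{\mathcal{H}_{t+1}})$ is not an identity and is not available as an inequality in the direction you need: additivity of the log-likelihood holds only at a \emph{fixed} parameter, $r_{t+1}(\theta)=r_t(\theta)+\log P(\mathcal{R}_{t+1}\mid\theta)$, whereas your right-hand side evaluates each piece at a different optimizer of a \emph{hybrid} objective ($\theta_{\mathcal{H}_{1:t}}$ maximizes $\log P(\mathcal{H}_{1:t}\mid\theta)$, not $r_t$), and nothing in Definitions~\ref{def:local_opt_H}--\ref{def:golo_opt_R} or the assumptions controls the real-data performance of $\theta_{\mathcal{H}_{1:t+1}}$ by that of $\theta_{\mathcal{H}_{1:t}}$ and $\theta_{\mathcal{H}_{t+1}}$. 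Your fallback (i) is not ``implicit in Assumption~\ref{ass:hist_bound}'': that assumption relates the sum $r_t(\theta_{\mathcal{H}_{1:t}})+\log P(\mathcal{R}_{t+1}\mid\theta_{\mathcal{H}_{t+1}})$ to $\rho\, r_{t+1}(\theta_{\mathcal{R}_{1:t+1}})$, not to $r_{t+1}(\theta_{\mathcal{H}_{1:t+1}})$; and fallback (ii) has no support in the stated hypotheses. Relatedly, your step 3 misuses Assumption~\ref{ass:hist_bound}: it is an \emph{upper} bound on the hybrid-optima sum, while your chain of lower bounds needs a lower bound on the real-optima sum $r_t(\theta_{\mathcal{R}_{1:t}})+\log P(\mathcal{R}_{t+1}\mid\theta_{\mathcal{R}_{t+1}})$ in terms of $r_{t+1}(\theta_{\mathcal{R}_{1:t+1}})$; an upper bound cannot be spliced into that chain, so the final absorption of $\rho$ and $(1-\epsilon_t)$ into $(1-\epsilon_{t+1})$ never actually happens.

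The paper avoids both problems by never decomposing $r_{t+1}(\theta_{\mathcal{H}_{1:t+1}})$ across different optima. It instead bounds the gap $r_{t+1}(\theta_{\mathcal{R}_{1:t+1}})-r_{t+1}(\theta_{\mathcal{H}_{1:t+1}})$ from above: drop the term $-r_{t+1}(\theta_{\mathcal{H}_{1:t+1}})$, use optimality of $\theta_{\mathcal{R}_{1:t}}$ and $\theta_{\mathcal{R}_{t+1}}$ to replace evaluations at $\theta_{\mathcal{R}_{1:t+1}}$ by the separate real-data optima, then \emph{invert} the inductive hypothesis and Assumption~\ref{ass:local_bound} to pass from real to hybrid quantities at the cost of a factor $\frac{1}{1-\epsilon_t}$, at which point Assumption~\ref{ass:hist_bound} applies exactly as written (to the hybrid sum, in its stated direction) and yields $\frac{\rho}{1-\epsilon_t}\,r_{t+1}(\theta_{\mathcal{R}_{1:t+1}})\leq\epsilon_{t+1}\,r_{t+1}(\theta_{\mathcal{R}_{1:t+1}})$. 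To repair your argument you would need to restructure the inductive step along these lines (bound the difference and use the assumptions in the inverted direction), rather than seek a task-wise decomposition of $r_{t+1}(\theta_{\mathcal{H}_{1:t+1}})$; your concern about sign bookkeeping is legitimate, but it is secondary to this structural issue.
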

\textbf{We provide the detailed proof of Theorem.~\ref{theo:memory} in Appendix.~\ref{app:proof}.}

\begin{remark}
 According to Theorem~\ref{theo:memory}, when $\epsilon_t$ and $\rho$ are small, Eq.\ref{equ:performance} is well bounded. In other words, if the proposed hybrid memory is optimized using $\mathcal{L}(\cdot, \cdot)$ in Eq.\ref{equ:hm_obj} for each task in CIL, the performance of the model trained on the hybrid memory across all tasks can achieve results comparable to those of a model trained on a real dataset for all tasks.
\end{remark}

\subsection{Model Update}
\label{sec:summary}
Next, we describe the model update based on the updated hybrid memory. After optimizing the hybrid memory $\mathcal{H}_t$ for the task-$t$, the hybrid memory is then updated as follows:
\begin{equation}\label{equ:hm_update}
\mathcal{H}_{1:t} \leftarrow \mathcal{H}_{1:t-1} \cup \left\{\mathcal{S}_t^{\star}, \mathcal{R}_{t}[A_t^{\star}]\right\}.
\end{equation}
We then combine the updated hybrid memory, $\mathcal{R}_{t+1}$, which contains data from previous tasks, with the real exemplars, $\mathcal{R}_{t+1}$, to train the CIL model for task-$t+1$. 
As illustrated on the left side of Fig.\ref{fig:pipeline}, to achieve the optimal model trained on the hybrid memory of all prior tasks, i.e., $\theta_{1:t}^H$ in Eq.~\ref{equ:performance}, we adopt an alternating model update strategy based on the data and hybrid memory of task-$t$.

Finally, we detail the training pipeline for task-$t$ in Algorithm~\ref{alg:pipeline}.
\SetKwComment{Comment}{/* }{ */}
\SetKwProg{Init}{init}{}{}
\SetKwInOut{Input}{input}
\SetKwInOut{Output}{output}
\SetKwInOut{Init}{initialize}
\begin{algorithm}[!th]
\caption{Training with Hybrid Memory on Task-$t$}\label{alg:pipeline}
\Input{Epochs: $N$; sliding window: $\tau$; real data for classes $C_{t}$: $\mathcal{R}_{t}$; number of synthetic exemplars per class: $k$; hybrid memory from previous tasks: $\mathcal{H}_{1:t-1}$}
Randomly select $k$ samples per class from $\mathcal{R}_{t}$ as $\mathcal{S}_{t,0}$\;
\For{$j \in \{1, \dots, N\}$}{
    update the model as $\theta_j$ on $\mathcal{R}_t\cup\mathcal{H}_{1:t-1}$ using a specified CIL objective and cache $\theta_j$\;
    \If{$j>\tau$}{
        remove the cached checkpoint $\theta_{j-\tau}$\;
        update synthetic data $\mathcal{S}_{t,j}$ using Eq.~\ref{equ:add_dist_sw}\;
    }
    \Else{
        $\mathcal{S}_{t,j}\gets\mathcal{S}_{t,j-1}$\;
    }
}
$\mathcal{S}_{t}\gets\mathcal{S}_{t,N}$\;
optimize $A_t$ in Eq.~\ref{equ:dm_select} using Algorithm~\ref{alg:selection} to obtain the selected real exemplars as $\mathcal{R}_{t}[A_t]$\;
\Output{Hybrid Memory $\mathcal{H}_{1:t}\gets\mathcal{H}_{1:t-1}\cup\{\mathcal{S}_{t}, \mathcal{R}_{t}[A_t]\}$}
\end{algorithm}

\section{Experiments}\label{sec:experiment}
In this section, we first compare the performance of the proposed hybrid memory with the replay-based baselines in CIL. We then demonstrate that the integration of the hybrid memory can enhance the performance of some existing replay-based CIL methods. Additionally, we investigate the ratio of synthetic exemplars within the hybrid memory. Finally, we examine the effectiveness of our conditional real data selection in choosing ideal real exemplars.


\textbf{Datasets.} We evaluate the proposed method on two commonly used benchmarks for class incremental learning (CIL), CIFAR-100~\cite{krizhevsky2009learning} and TinyImageNet~\cite{yao2015tiny}. \textit{\textbf{CIFAR-100:}} CIFAR-100 contains $100$ classes, the whole dataset includes $50,000$ training images with $500$ images per class and $10,000$ test images with $100$ images per class. \textit{\textbf{TinyImageNet:}} TinyImageNet contains $200$ classes, the whole dataset includes $100,000$ training images with $500$ images per class and $10,000$ test images with $50$ images per class.

\textbf{Protocol.} To evaluate the performance of our method, we apply two commonly used protocols~\cite{zhou2023deep} for both datasets. 1) \textbf{zero-base:} For the zero-base setting, we split the whole classes into 5, and 10 tasks (5, and 10 phases) evenly to train the network incrementally. 2) \textbf{half-base:} For the half-base setting, we train half of the whole classes as the first task and then split the rest half of the classes into 5, and 10 tasks (5, and 10 phases) evenly to train the network incrementally. To ensure a fair comparison across different replay-based methods, we maintain a fixed memory capacity that stores 20 exemplars per class for training.

\textbf{Baselines.} We compare some replay-based methods that incorporate our proposed hybrid memory with other established replay-based approaches, including iCaRL~\cite{rebuffi2017icarl}, BiC~\cite{wu2019large}, WA~\cite{zhao2020maintaining}, PODNet~\cite{douillard2020podnet}, FOSTER~\cite{wang2022foster}, and BEEF~\cite{wang2022beef}. We assess the performance of these methods using two commonly used metrics in CIL: Average Accuracy (AA)~\cite{chaudhry2018riemannian}, which represents the overall performance at a given moment, and Average Incremental Accuracy (AIA)~\cite{rebuffi2017icarl}, which captures historical performance variations.


\textbf{Experiments Setting.} In our experiments, we insert our hybrid memory into various replay-based CIL methods: iCaRL~\cite{rebuffi2017icarl}, FOSTER~\cite{wang2022foster}, and BEEF~\cite{wang2022beef}. All methods are implemented with PyTorch~\cite{paszke2017automatic} and PyCIL~\cite{zhou2021pycil}, which are well-regarded tools for CIL. We use ResNet-18~\cite{he2016deep} as the feature extractor for all methods, with a uniform batch size of $B=128$. For iCaRL, we employ the SGD optimizer~\cite{ruder2016overview}, with a momentum of 0.9 and a weight decay of 2e-4. The model is trained for 170 epochs, starting with an initial learning rate of 0.01, which is then reduced by a factor of 0.1 at the 80th and 120th epochs. For FOSTER~\cite{wang2022foster}, we use SGD with a momentum of 0.9. The weight decay is set at 5e-4 during the boosting phases and 0 during the compression phase. The model is trained for 170 epochs during the boosting phases and 130 epochs during the compression phase. The learning rate starts at 0.01 and follows a cosine annealing schedule that decays to zero across the specified epochs. For BEEF~\cite{wang2022beef}, we adopt SGD with a momentum of 0.9. The weight decay is 5e-4 during the expansion phase and 0 during the fusion phase. The model is trained for 170 epochs in the expansion phase and 60 in the fusion phase, with an initial learning rate of 0.01, decaying to zero according to a cosine annealing schedule. The detailed hyper-parameter settings for FOSTER and BEEF can be found in the original papers. For the specific data distillation algorithm for CDD, we adapt DM~\cite{zhao2023dataset}, a distribution matching DD method, as the objective $\ell(\cdot,\cdot;\cdot)$ in Eq.~\ref{equ:add_dist}. For the window size $\tau$ in the sliding window version CDD, we use $\tau=4$. For DM, we use the SGD optimizer with a momentum of 0.5 to learn the synthetic exemplars. The learning rate $\eta$ in Eq.~\ref{equ:add_dist_sw} is set to 0.1 for CIFAR-100 and 1 for TinyImageNet. For both datasets, we train the synthetic exemplars for 10,000 iterations. All experiment results are averaged over three runs. We run all experiments on a single NVIDIA RTX A6000 GPU with 48GB of graphic memory.

\subsection{Main Results}\label{subsec:main_result}
First, we compare the performance of the proposed hybrid memory with various replay-based CIL methods on CIFAR-100 and TinyImageNet datasets under zero-base settings with 5 and 10 phases and half-base settings with 5 and 10 phases, storing 20 exemplars per class. To demonstrate the effectiveness and applicability of our approach, we integrate it into three replay-based CIL models: the classical method iCaRL, and the recent methods FOSTER and BEEF. In this experiment, we adopt the DM as our CDD technique to extract synthetic exemplars for the hybrid memory, denoted by \textit{H}. Our hybrid memory comprises 10 synthetic exemplars generated by CDD and 10 real exemplars selected through the proposed conditional real data selection.


Tables~\ref{tabel:sota_zero} and~\ref{tabel:sota_half} show the comparison of the performance achieved by iCaRL, BEEF, and FOSTER using hybrid memory against different baselines under two different settings. The experimental results are averaged over three runs. Regarding the \textbf{zero-base setting}, as shown in Table~\ref{tabel:sota_zero}, the FOSTER integrated with our hybrid memory, which achieves the best performance among our three methods in AIA and the last average accuracy (LAA), outperforms the best baseline FOSTER by a large margin on CIFAR-100 under both 5-phase and 10-phase settings. Likewise, the FOSTER with our method achieves higher AIA than the best baseline FOSTER on TinyImageNet for the same settings and also performs the best in LAA on TinyImageNet under the 10-phase setting.



For the \textbf{half-base setting}, as shown in Table~\ref{tabel:sota_half}, the BEEF incorporated with our method surpasses the best baseline (FOSTER) with a large margin in AIA and LAA on CIFAR-100 under the 5-phase setting, and on TinyImageNet under both 5-phase and 10-phase settings. Similarly, the FOSTER integrated with the proposed hybrid memory outperforms the best baseline FOSTER in AIA and LAA on CIFAR-100 under the 10-phase setting.

The reason why our method performs well is that the optimized hybrid memory, combining the strengths of synthetic and real exemplars, can retain more useful knowledge of previous tasks. In sum, the experimental results demonstrate the effectiveness of our hybrid memory in improving CIL performance. \textbf{\textit{Due to limited space, we present more detailed results in Appendix~\ref{app:add_result}}}.
\begin{table}[thb!]
\vspace{-0.15in}
\caption{AIA (average incremental accuracy) and LAA (the last task's average accuracy) of the three replay-based methods with the proposed hybrid memory and baselines on both CIFAR-100 and TinyImageNet under \textbf{zero-base} 5 and 10 phases with 20 exemplars per class. ``\textbf{\textit{w} \textit{H}}'' denotes using our proposed hybrid memory with distribution matching (DM).}
\label{tabel:sota_zero}
\centering
\begin{adjustbox}{width=1\textwidth}
\begin{tabular}{l|cc|cc|cc|cc}
\toprule
\multirow{2}{*}{Method (20 exemplars per class)} &\multicolumn{2}{c|}{CIFAR-AIA [\%]$\uparrow$} &\multicolumn{2}{c|}{CIFAR-LAA [\%]$\uparrow$} &\multicolumn{2}{c|}{Tiny-AIA [\%]$\uparrow$} &\multicolumn{2}{c}{Tiny-LAA [\%]$\uparrow$} \\ \cline{2-9}
&b0-5 &b0-10 &b0-5 &b0-10 &b0-5 &b0-10 &b0-5 &b0-10 \\\hline
BiC~\cite{wu2019large} &66.57	&56.34	&51.90  &33.33  &60.20 &42.12 &47.41  &18.22  \\
WA~\cite{zhao2020maintaining} &68.02	&64.44	&57.84	&50.57 &60.21 & 45.92 &47.96 &30.94 \\
PODNet~\cite{douillard2020podnet} &64.87 &52.07	&49.97	&37.14 &49.33 &39.77  &30.33	&22.21 \\
iCaRL~\cite{rebuffi2017icarl} &63.26	&57.03	&48.70	&44.44 &56.79 &47.15  &38.55  &30.29\\
BEEF~\cite{wang2022beef} &75.21	&67.73	&68.17	&53.49	&69.02	&62.98	&60.11	&50.96\\
FOSTER~\cite{wang2022foster} &78.15	&75.00	&70.76  &65.12  &68.60  &65.37  &57.60	&52.70 \\ \hline
iCaRL \textit{w} \textit{H} (Ours) &69.47 &60.00	&56.21 	&45.23 	&62.97	&50.78	&47.44 	&34.42  \\
BEEF \textit{w} \textit{H} (Ours) &76.66	&71.04	&68.48 	&59.62 	&69.54	&64.06	&\textbf{62.91} 	&52.13  \\
FOSTER \textit{w} \textit{H} (Ours) &\textbf{78.79}	&\textbf{76.01}	&\textbf{70.93} 	&\textbf{66.63} 	&\textbf{70.28}	&\textbf{68.55}	&60.24 	&\textbf{56.93}  \\
\bottomrule
\end{tabular}
\end{adjustbox}
\end{table}
\begin{table}[thb!]
\vspace{-0.15in}
\caption{AIA and LAA of the three replay-based methods with the proposed hybrid memory and baselines on both CIFAR-100 and TinyImageNet under \textbf{half-base} (train half of the whole classes as the first task) 5 and 10 phases with 20 exemplars per class. ``\textbf{\textit{w} \textit{H}}'' denotes using our proposed hybrid memory with distribution matching (DM).
}
\label{tabel:sota_half}
\centering
\begin{adjustbox}{width=1\textwidth}
\begin{tabular}{l|cc|cc|cc|cc}
\toprule
\multirow{2}{*}{Method (20 exemplars per class)} &\multicolumn{2}{c|}{CIFAR-AIA [\%]$\uparrow$} &\multicolumn{2}{c|}{CIFAR-LAA [\%]$\uparrow$} &\multicolumn{2}{c|}{Tiny-AIA [\%]$\uparrow$} &\multicolumn{2}{c}{Tiny-LAA [\%]$\uparrow$} \\ \cline{2-9}
&half-5 &half-10 &half-5 &half-10 &half-5 &half-10 &half-5 &half-10 \\\hline
BiC~\cite{wu2019large} &64.91	&58.11	&49.48  &41.48  &51.48 &44.29 &36.41  &25.98  \\
WA~\cite{zhao2020maintaining} &71.90	&68.08	&63.34	&56.30 &52.55 &45.32 &38.45 &29.07 \\
PODNet~\cite{douillard2020podnet} &64.84 &58.81	&54.09	&47.46 &41.84 &33.05  &29.80	&22.88 \\
iCaRL~\cite{rebuffi2017icarl} &61.91	&55.73	&45.62	&40.87 &44.38 &33.02  &28.86  &17.50\\
BEEF~\cite{wang2022beef} &68.57	&69.27	&61.56	&60.90	&58.39	&54.69	&51.52	&48.05 \\
FOSTER~\cite{wang2022foster} &73.34	&69.51	&65.80  &60.30  &60.94  &54.67  &52.10	&43.60 \\
\hline
iCaRL \textit{w} \textit{H} (Ours) &64.13 &57.58	&51.02 	&41.60 	&49.16	&34.91	&33.23 	&19.99  \\
BEEF \textit{w} \textit{H} (Ours) &\textbf{76.31}	&73.25	&\textbf{70.52} 	&61.80 	&\textbf{63.76}	&\textbf{60.76}	&\textbf{56.89} 	&\textbf{52.27}  \\
FOSTER \textit{w} \textit{H} (Ours) &75.70	&\textbf{74.37}	&68.53 	&\textbf{64.95} 	&63.60	&58.57	&55.67 	&48.73  \\
\bottomrule
\end{tabular}
\end{adjustbox}
\end{table}

\subsection{Effectiveness of Hybrid Memory}
In this section, we demonstrate the overall effectiveness of the proposed hybrid memory in enhancing the performance of some existing replay-based CIL methods. In this experiment, we adapt the original DM objective in CDD, denoted as $\ell(\cdot,\cdot;\cdot)$ in Eq.~\ref{equ:add_dist}, to generate synthetic exemplars, which make up the synthetic memory (\textit{S}). We then conditionally select the optimal real exemplars based on these synthetic exemplars to construct the hybrid memory (\textit{H}). To validate its general effectiveness, we integrate both the synthetic memory and hybrid memory into three replay-based CIL methods: iCaRL, FOSTER, and BEEF. We evaluate the performance of these methods on CIFAR-100 under two configurations: zero-base (5 and 10 phases) and half-base (5 and 10 phases), using 20 exemplars per class. The performance comparison between the original methods and those using our hybrid memory, as shown in Table~\ref{tabel:ablation_s_h}, indicates that integrating hybrid memory significantly enhances performance across all three replay-based CIL methods. 
Especially, under the half-base setting, our hybrid memory improves iCaRL by 2.22\% and 1.85\%, BEEF by 7.74\% and 3.98\%, and FOSTER by 2.36\% and 4.86\% in AIA for the 5-phase and 10-phase settings. Comparing the performance of the three methods integrated with our hybrid memory (\textit{w} \textit{H}) against the original methods with real memory and with synthetic memory (\textit{w} \textit{S}), our hybrid memory consistently outperforms both the synthetic and original real memory across different CIL methods and settings, all while maintaining the same exemplar buffer size.

\begin{table}[thb!]
\caption{AIA and LAA of the three replay-based methods with the proposed hybrid memory, synthetic exemplars, and baselines on CIFAR-100 under \textbf{zero-base} and \textbf{half-base} 5 and 10 phases with 20 exemplars per class. 
``\textit{w} \textit{S}'' denotes using the synthetic memory achieved by CDD with distribution matching (DM). 
}
\label{tabel:ablation_s_h}
\centering
\begin{adjustbox}{width=1\textwidth}
\begin{tabular}{l|cc|cc|cc|cc}
\toprule
\multirow{2}{*}{Method (20 exemplars per class)} &\multicolumn{2}{c|}{CIFAR-AIA [\%]$\uparrow$} &\multicolumn{2}{c|}{CIFAR-LAA [\%]$\uparrow$} &\multicolumn{2}{c|}{CIFAR-AIA [\%]$\uparrow$} &\multicolumn{2}{c}{CIFAR-LAA [\%]$\uparrow$} \\ \cline{2-9}
&b0-5 &b0-10 &b0-5 &b0-10 &half-5 &half-10 &half-5 &half-10 \\\hline
iCaRL &63.26	&57.03	&48.70	&44.44 &61.91	&55.73	&45.62	&40.87\\
iCaRL \textit{w} \textit{S} (Ours) &68.32	&59.34	&54.77	&43.67 &63.78 &55.89  &50.74  &37.45\\
iCaRL \textit{w} \textit{H} (Ours) &\textbf{69.47} &\textbf{60.00}	&\textbf{56.21} 	&\textbf{45.23} 	&\textbf{64.13} &\textbf{57.58}	&\textbf{51.02} 	&\textbf{41.60}  \\
\hline
BEEF &75.21	&67.73	&68.17	&53.49	&68.57	&69.27	&61.56	&60.90 \\
BEEF \textit{w} \textit{S} (Ours) &76.42	&69.23	&67.97	&57.25 &74.64 &70.50  &68.61  &59.36\\
BEEF \textit{w} \textit{H} (Ours) &\textbf{76.66}	&\textbf{71.04}	&\textbf{68.48} 	&\textbf{59.62} 	&\textbf{76.31}	&\textbf{73.25}	&\textbf{70.52} 	&\textbf{61.80}  \\
\hline
FOSTER &78.15	&75.00	&70.76  &65.12  &73.34	&69.51	&65.80  &60.30\\
FOSTER \textit{w} \textit{S} (Ours) &77.73	&73.51	&70.32  &64.07  &75.25  &71.97  &68.45	&62.25 \\
FOSTER \textit{w} \textit{H} (Ours) &\textbf{78.79}	&\textbf{76.01}	&\textbf{70.93} 	&\textbf{66.63} 	&\textbf{75.70}	&\textbf{74.37}	&\textbf{68.53} 	&\textbf{64.95}  \\
\bottomrule
\end{tabular}
\end{adjustbox}
\end{table}

\subsection{Impact of the Ratio in Hybrid Memory}
We also study the impact of varying the ratio of synthetic exemplars in the hybrid memory under a limited exemplar buffer size. Using grid search, we empirically determine the optimal ratio of synthetic exemplars in the hybrid memory. In this experiment, we apply our hybrid memory to the replay-based CIL method iCaRL on CIFAR-100, using the zero-base 10-phase setting with 20 exemplars per class, and evaluate its performance at different synthetic-to-real exemplar ratios. As shown in Fig.~\ref{fig:ratio}, the performance peaks when the ratio is $0.5$. This result suggests that synthetic exemplars and real exemplars are equally important, with both contributing to improved performance.

\begin{figure}[!tb]
\centering
    \begin{minipage}{0.49\textwidth}
        \centering
        \includegraphics[width=\textwidth]{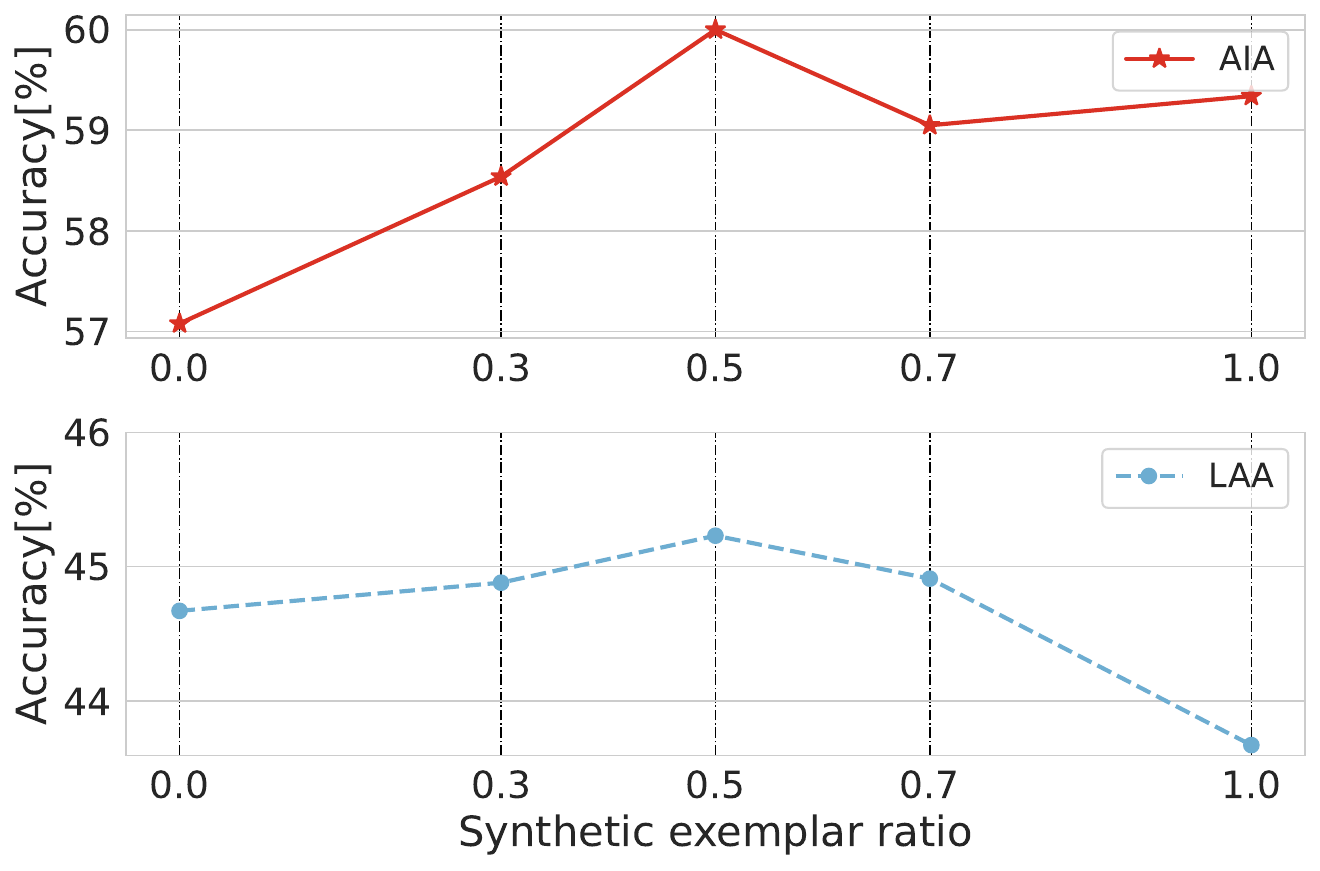}
        \caption{LAA and AIA of iCaRL with proposed hybrid memory at different synthetic exemplar ratios. ``LAA'' refers to the last average accuracy, ``AIA'' refers to the average incremental accuracy.}
        \label{fig:ratio}
    \end{minipage}
    \hfill
    \begin{minipage}{0.49\textwidth}
        \centering
        \includegraphics[width=\textwidth]{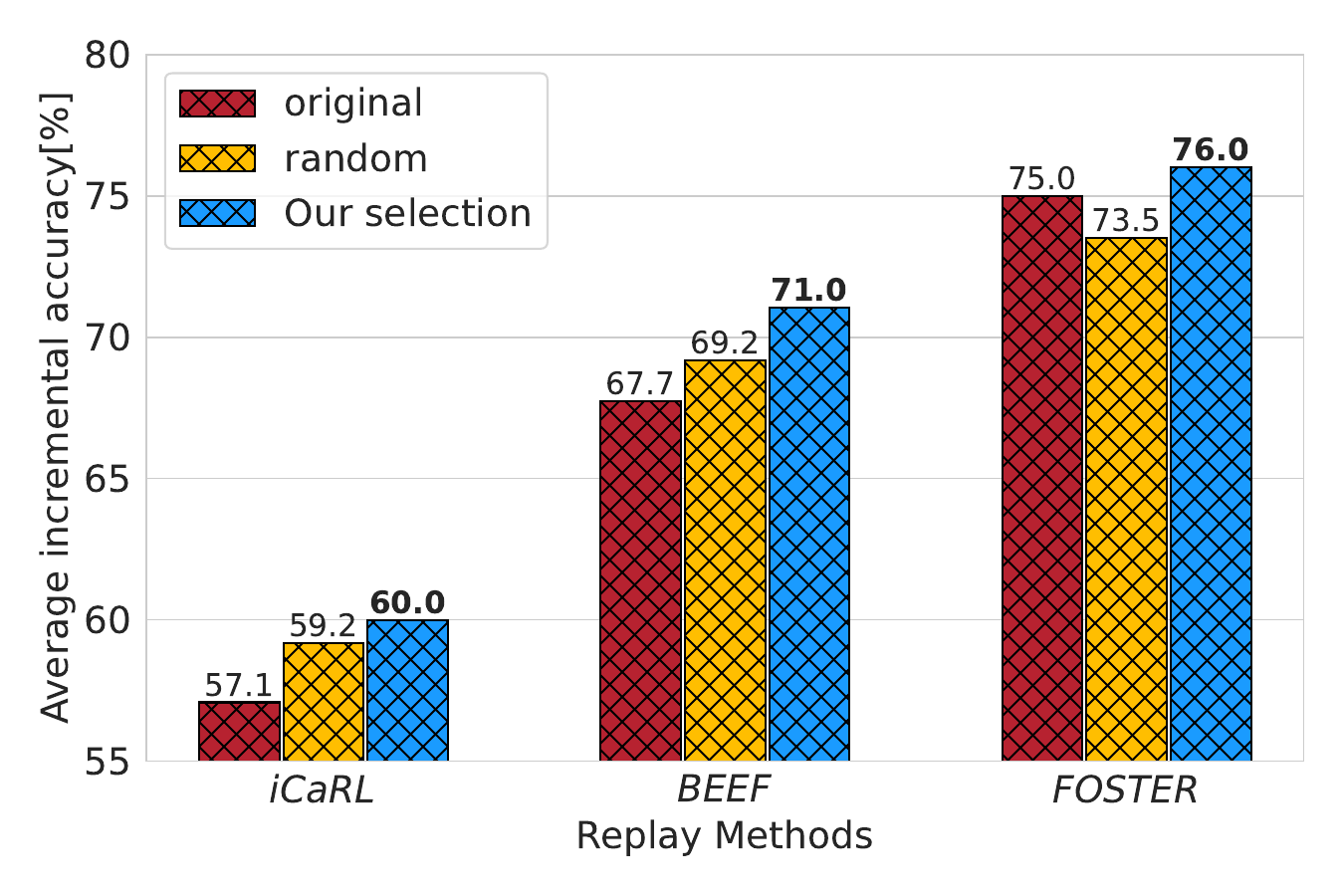}
        \caption{Performance comparison of the hybrid memory using different selection methods with iCaRL, BEEF, and FOSTER on CIFAR-100, all using the same exemplar buffer size.}
        \label{fig:selection_compare}
    \end{minipage}
\end{figure}

\subsection{Effectiveness of Conditional Real Data Selection}
To verify the effectiveness of our proposed conditional real data selection, we compared it with random sampling. Specifically, we apply the conditional real data selection in Eq.\ref{equ:dm_select} to choose optimal real exemplars based on the synthetic exemplars by CDD, and then combine selected real exemplars with synthetic ones to construct a hybrid memory. For a fair comparison, we also construct a hybrid memory by randomly selecting real exemplars and combining them with synthetic exemplars. These hybrid memories are individually integrated into different replay-based CIL methods, iCaRL, FOSTER, and BEEF. We evaluate the performance using AIA on CIFAR-100 under a zero-base setting with 10 phases, with each class containing 20 exemplars. The hybrid memory maintains a balance of 10 synthetic and 10 real exemplars, while the original method utilizes 20 real exemplars. As shown in Fig.\ref{fig:selection_compare}, the hybrid memory with our conditional real data selection not only outperforms the original method but also demonstrates a significant advantage over hybrid memories with random selection. Notably, the performance comparison of FOSTER with different memory types reveals that random sampling can affect the effectiveness of the generated synthetic exemplars.The superior performance of our proposed hybrid memory is attributed to the conditional real data selection's ability to optimally choose real exemplars that complement the synthetic exemplars.
\section{Conclusion and Limitations}\label{sec:conclusion}
In this work, we introduced a hybrid memory that stores a limited number of real and synthetic exemplars for improving the CIL performance. Specifically, we devised a new Continual Data Distillation (CDD) technique that can adapt most existing DD methods into CIL to generate optimal synthetic exemplars for the general replay-based CIL models. To further complement synthetic data, we proposed a conditional real-data selection to choose optimal real exemplars based on synthetic exemplars. As a result, the combination of synthetic and real exemplars significantly enhanced the CIL performance. Extensive experimental results on two benchmarks demonstrated that our hybrid memory can improve the performance of the original methods and other baselines. A limitation of our method is that the ratio between synthetic and real exemplars in the hybrid memory is currently set as an empirical hyper-parameter. In future work, we plan to develop an adaptive algorithm to automatically determine the optimal ratio for hybrid memory.

\bibliographystyle{abbrv}
\bibliography{reference.bib}

\clearpage
\appendix
\section{Notation list.}\label{app:notation}
Table \ref{tab:notations} describes the important notations used in our work.

\newcolumntype{b}{>{\hsize=1.3\hsize}X}
\newcolumntype{s}{>{\hsize=.7\hsize}X}
\begin{table}[!htb]
    \caption{Summary of Notations.}\label{tab:notations}
    \centering
    \begin{tabularx}{\linewidth}{|s|b|}
        \hline
        Notation & Definition \\ \hline
        $t$ &  The current task ID\\ 
        \hline
        $\mathcal{R}_{1:t}\triangleq\bigcup_{i=1}^{t-1}\mathcal{R}_i\cup\mathcal{R}_t$ &  The real data for tasks from 1 to $t-1$\\ 
        \hline
        $\mathcal{R}_t\triangleq\{(x_i, y_i)\}_{i=1}^{n_t}$ &  The real data for task-$t$\\ 
        \hline
        $x_i$ &  The real data sample\\ 
        \hline
        $y_i$ &  The label of the real data sample $x_i$\\ 
        \hline
        $n_t$ &  The number of real instances in task-$t$\\ 
        \hline
        $C_t$ &  The whole label space for task-$t$\\ 
        \hline
        $\mathcal{H}_{1:t-1}\triangleq\bigcup_{i=1}^{t-1}\mathcal{H}_i$ &  The hybrid memory for tasks from $1$ to $t-1$\\ 
        \hline
        $\mathcal{H}_{t}\triangleq\{\mathcal{S}_{t}, \mathcal{R}_{t}[A_t]\}$ &  The hybrid memory for the task-$t$\\ 
        \hline
        $\mathcal{S}_{t}$ &  The synthetic exemplars for the task-$t$\\ 
        \hline
        $\mathcal{S}_{t,j}$ &  The updated synthetic exemplars for the task-$t$ after epoch-$j$ of task-$t$ in sliding window version of CDD\\ 
        \hline
        $\tau$ &  The window size (the number of cached checkpoints) for sliding window version of CDD\\ 
        \hline
        $\eta$ &  The learning rate for updating the synthetic exemplars $\mathcal{S}_{t,j}$ by sliding window version of CDD\\ 
        \hline
        $A_t\subseteq\{1,2,\cdots,|\mathcal{R}_t|\}$ & The indexes of subset chosen from the real dataset $\mathcal{R}_t$\\
        \hline
        $f(\cdot;\theta)$ &  The whole model, with parameters $\theta$\\ 
        \hline
        $\theta_{\mathcal{H}_t}$ & The optimal model trained on task-$t$'s hybrid memory\\ 
        \hline
        $\theta_{\mathcal{R}_t}$ & The optimal model trained on task-$t$'s real data\\ 
        \hline
        $\theta_{\mathcal{R}_{1:t}}$ & The optimal model trained on the real data of all tasks from 1 to $t$\\ 
        \hline
        $\theta_{\mathcal{H}_{1:t}}$ & The optimal model trained on the hybrid memory of all prior tasks from 1 to $t$\\ 
        \hline
        $\epsilon_t$ & The scalar value bounding the performance of $\theta_{\mathcal{H}_{1:t}}$ on $\mathcal{R}_{1:t}$ by the performance of $\theta_{\mathcal{R}_{1:t}}$ on $\mathcal{R}_{1:t}$.\\ 
        \hline
        $\rho$ & The scaling factor that bounds the sum of the performance of $\theta_{\mathcal{H}_{1:t}}$ on $\mathcal{R}_{1:t}$ and the performance of $\theta_{\mathcal{H}_{t+1}}$ on $\mathcal{R}_{t+1}$.\\ 
        \hline
    \end{tabularx}
\end{table}

\section{Algorithm of Conditional Real Data Selection}\label{app:algo_seect}
As described in Sec.~\ref{subsec:memory}, we employ a greedy algorithm for our proposed conditional real data selection. The details of this algorithm are provided below,
\vspace{-0.1in}
\SetKwComment{Comment}{/* }{ */}
\SetKwProg{Init}{init}{}{}
\SetKwInOut{Input}{input}
\SetKwInOut{Output}{output}
\SetKwInOut{Init}{initialize}
\begin{algorithm}[!h]
\caption{Conditional Real Data Selection (Greedy Algorithm)}\label{alg:selection}
\Input{Real data for classes $C_{t}$: $\mathcal{R}_{t}$; synthetic exemplars: $\mathcal{S}_{t}$; number of selected exemplars per class: $k$}
\Init{Selected subset $A_t\gets\emptyset$, minimum distance $d_{min}\gets+\infty$}
\While{$|A_t|<|C_t|\times k$}{
    \For{$i\in V_t\backslash A_t$}{
        $A'_t\gets A_t\cup\{i\}$\;
        compute the objective value in Eq.~\ref{equ:dm_select} as $d$, where $A_t$ is set to $A'_t$\;
        \If{$d < d_{min}$ and $\left|\{j\in A_t:y_j=y_i\}\right|<k$}{
            Update minimum distance: $d_{min}\gets d$\;
            $i^*\leftarrow i$\;
        }
    }
    Update subset $A_t\gets A_t\cup\{i^*\}$\;
}
\Output{Selected exemplars $\mathcal{R}_{t}[A_t]$}
\end{algorithm}

\section{The Proof For Theorem~\ref{theo:memory}.}\label{app:proof}

Here we provide the detailed proof for Theorem~\ref{theo:memory}.
\approx*

\begin{proof}
According to the Definition~\ref{def:golo_opt_R}, the $\theta_{\mathcal{R}_{1:t}}$ is the optimal solution of $r_t(\theta)$, thus we can get $r_t(\theta_{\mathcal{R}_{1:t}})\geq r_t(\theta_{\mathcal{R}_{1:t+1}})$. And according to the Definition~\ref{def:local_opt_R}, the $\theta_{\mathcal{R}_{t+1}}$ is the optimal solution of $\log{P(\mathcal{R}_{t+1}|\theta)}$, thus we can get $\log{P(\mathcal{R}_{t+1}|\theta_{\mathcal{R}_{t+1}})}\geq \log{P(\mathcal{R}_{t+1}|\theta_{\mathcal{R}_{1:t+1}})}$.

According to the Mathematical Induction:

\textbf{Base Case ($t=1$):} 
\[
\log{P(\mathcal{R}_{1:1}|\theta_{\mathcal{H}_{1:1}})}=\log{P(\mathcal{R}_{1}|\theta_{\mathcal{H}_{1}})},~\log{P(\mathcal{R}_{1:1}|\theta_{\mathcal{R}_{1:1}})}=\log{P(\mathcal{R}_{1}|\theta_{\mathcal{R}_{1}})}
\]
\[
\exists\epsilon \in [0,1),~\log P(\mathcal{R}_{1} | \theta_{\mathcal{H}_{1}}) \geq (1-\epsilon)\log P(\mathcal{R}_{1} | \theta_{\mathcal{R}_{1}})
\]
According to the Assumption~\ref{ass:local_bound}, the base case holds true.

\textbf{Inductive Hypothesis:} Assume that the formula holds for $\epsilon_t$ at task-$t$:
\[
\log{P(\mathcal{R}_{1:t}|\theta_{\mathcal{H}_{1:t}})}\geq(1-\epsilon_t)\log{P(\mathcal{R}_{1:t}|\theta_{\mathcal{R}_{1:t}})}
\]

\textbf{Inductive Step:} We will prove that the formula holds for $\epsilon_{t+1}$ at task-$t+1$:
\begin{align*}
\log{P(\mathcal{R}_{1:t+1}|\theta_{\mathcal{H}_{1:t+1}})}&=\log{P(\mathcal{R}_{1:t}|\theta_{\mathcal{H}_{1:t+1}})}+\log{P(\mathcal{R}_{t+1}|\theta_{\mathcal{H}_{1:t+1}})}\\
&=r_t(\theta_{\mathcal{H}_{1:t+1}})+\log{P(\mathcal{R}_{t+1}|\theta_{\mathcal{H}_{1:t+1}})}\\
&=r_{t+1}(\theta_{\mathcal{H}_{1:t+1}})
\end{align*}
\begin{align*}
\log{P(\mathcal{R}_{1:t+1}|\theta_{\mathcal{R}_{1:t+1}})}&=\log{P(\mathcal{R}_{1:t}|\theta_{\mathcal{R}_{1:t+1}})}+\log{P(\mathcal{R}_{t+1}|\theta_{\mathcal{R}_{1:t+1}})}\\
&=r_t(\theta_{\mathcal{R}_{1:t+1}})+\log{P(\mathcal{R}_{t+1}|\theta_{\mathcal{R}_{1:t+1}})}\\
&=r_{t+1}(\theta_{\mathcal{R}_{1:t+1}})
\end{align*}
\begin{align*}
r_{t+1}(\theta_{\mathcal{R}_{1:t+1}})-r_{t+1}(\theta_{\mathcal{H}_{1:t+1}})&\leq r_t(\theta_{\mathcal{R}_{1:t+1}})+\log{P(\mathcal{R}_{t+1}|\theta_{\mathcal{R}_{1:t+1}})}\\
&\leq r_t(\theta_{\mathcal{R}_{1:t}})+\log{P(\mathcal{R}_{t+1}|\theta_{\mathcal{R}_{t+1}})}\\
&\leq \frac{1}{1-\epsilon_t}\left(r_t(\theta_{\mathcal{H}_{1:t}})+\log{P(\mathcal{R}_{t+1}|\theta_{\mathcal{H}_{t+1}})}\right)\\
&\leq \frac{\rho}{1-\epsilon_t}r_{t+1}(\theta_{\mathcal{R}_{1:t+1}})\\
&\leq \epsilon_{t+1} r_{t+1}(\theta_{\mathcal{R}_{1:t+1}})
\end{align*}
Thus, by the principle of mathematical induction, the formula holds for all $\epsilon_t$ of task-$t$.
\end{proof}

\textbf{Discussion.} We discuss the performance boundary of the model trained with the proposed hybrid memory. According to Theorem~\ref{theo:memory}, the performance of the optimal model trained on the hybrid memory for all tasks which is denoted as $r_{t}(\theta_{1:t}^H)$, is bounded by the performance of the optimal model trained on the real dataset for all tasks, scaled by $1-\epsilon_t$, when $\epsilon_{t+1} \geq \frac{\rho}{1-\epsilon_t}$. It is evident that $\epsilon_t$ can converge to $\epsilon_t=\frac{1 \pm \sqrt{1-4\rho}}{2}$ if $\rho \leq \frac{1}{4}$, ensuring that $\epsilon_t$ remains within the bounds $\epsilon_t \in [0,1)$. Therefore, if the initial $\epsilon_t \leq \frac{1 \pm \sqrt{1-4\rho}}{2}$, the performance will be well bounded. We also illustrate how $\epsilon_t$ changes iteratively over 100 tasks with different values of $\rho$ and initial $\epsilon_t$ in Fig.~\ref{fig:rho_epsilon}.
\begin{figure}[!htb]
\centering
\subfigure[$\rho=0.1, initial \epsilon_t\in[0,\frac{1 - \sqrt{1-4\rho}}{2})$]{
\includegraphics[width=0.48\textwidth]{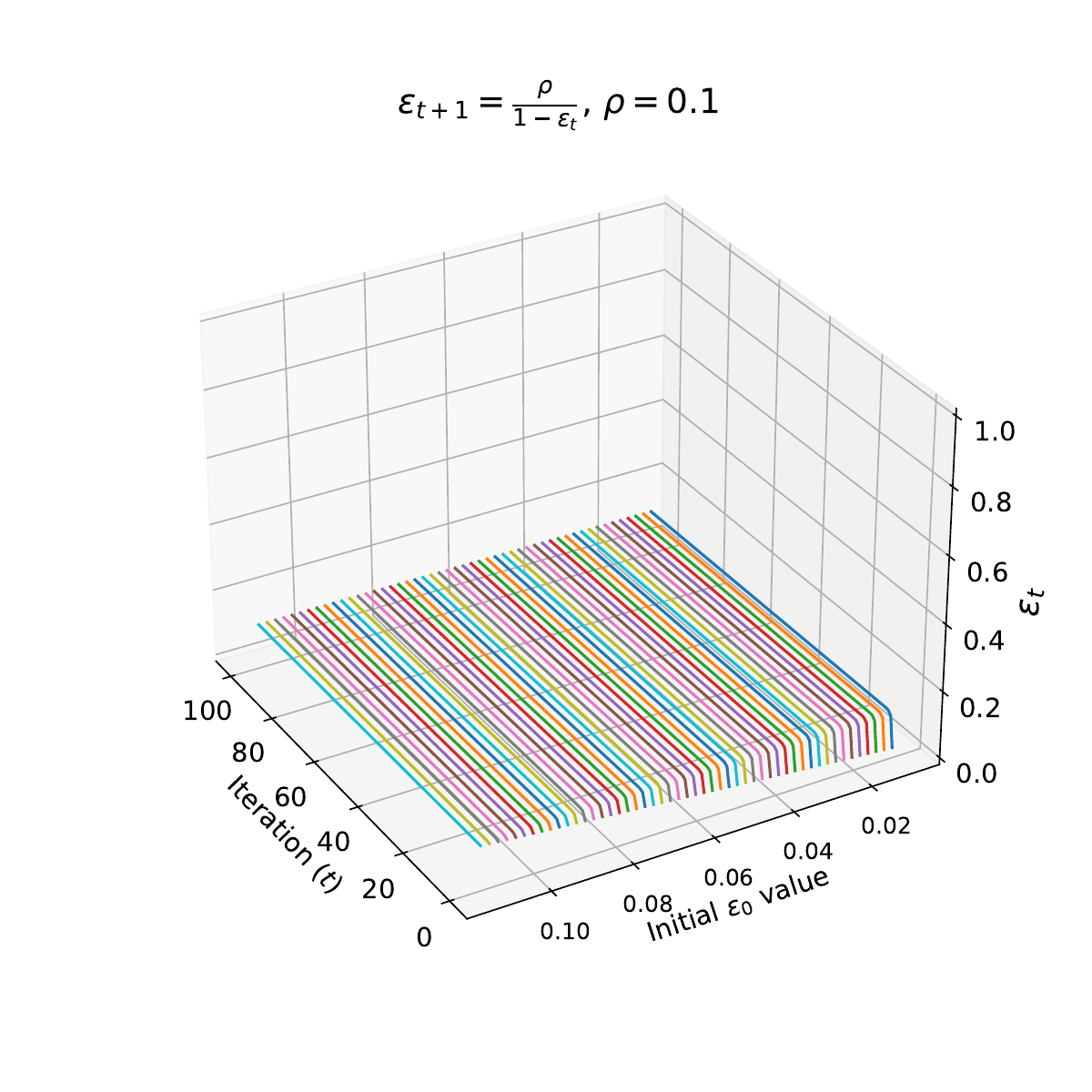}}
\label{fig:rho_0.1}
\subfigure[$\rho=0.15, initial \epsilon_t\in[0,\frac{1 - \sqrt{1-4\rho}}{2})$]{
\includegraphics[width=0.48\textwidth]{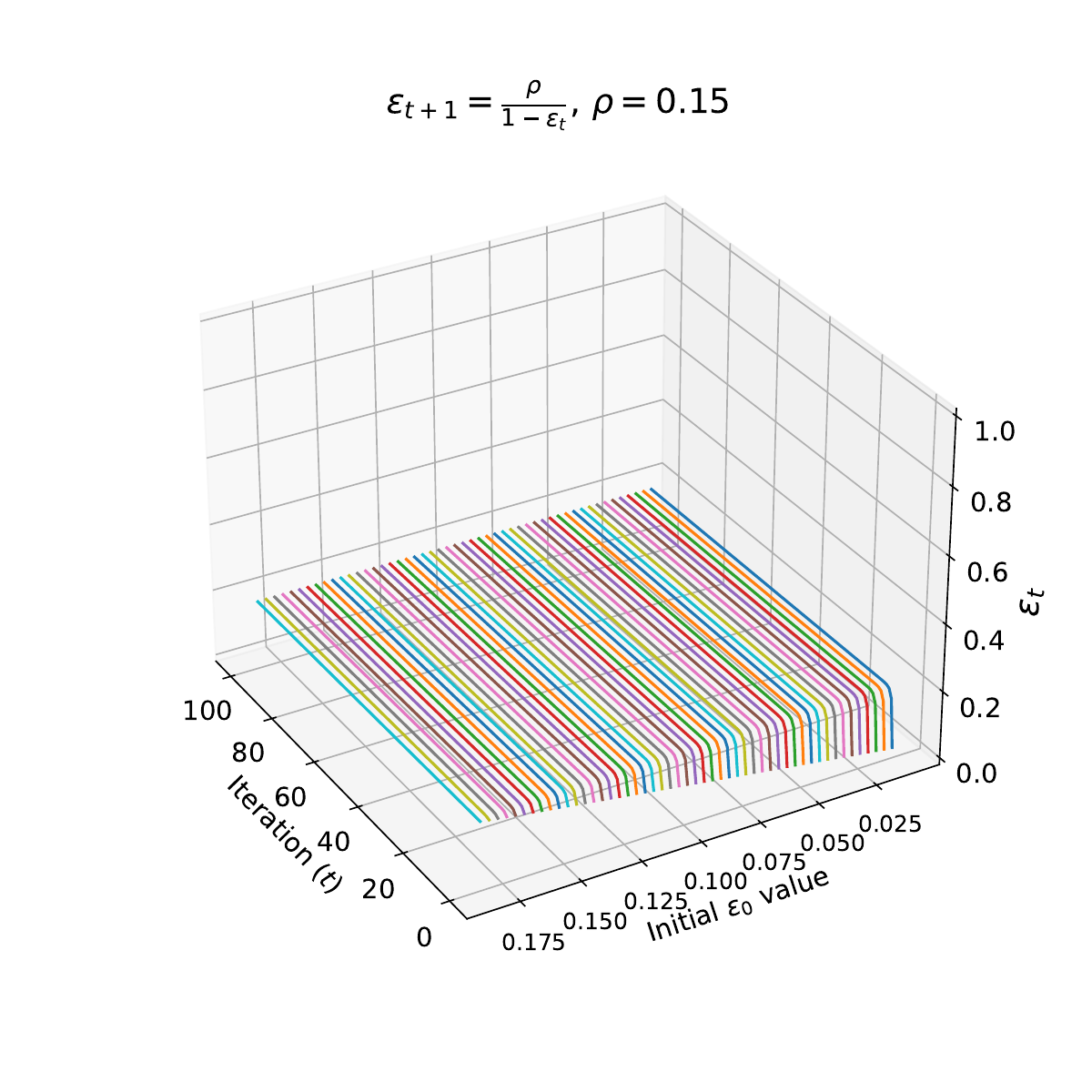}}
\label{fig:rho_0.15}
\subfigure[$\rho=0.2, initial \epsilon_t\in[0,\frac{1 - \sqrt{1-4\rho}}{2})$]{
\includegraphics[width=0.48\textwidth]{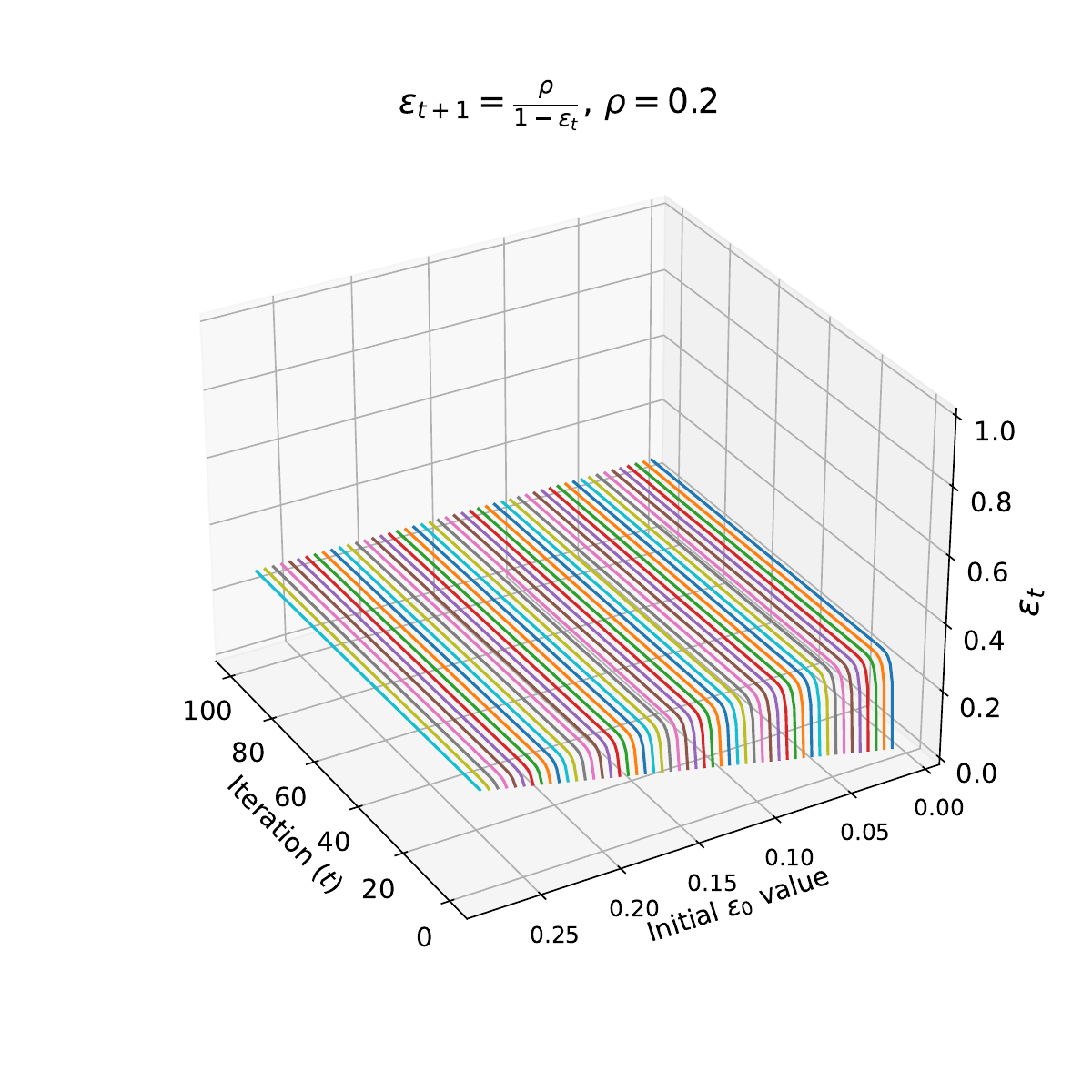}}
\label{fig:rho_0.2}
\subfigure[$\rho=0.25, initial \epsilon_t\in[0,\frac{1 - \sqrt{1-4\rho}}{2})$]{
\includegraphics[width=0.48\textwidth]{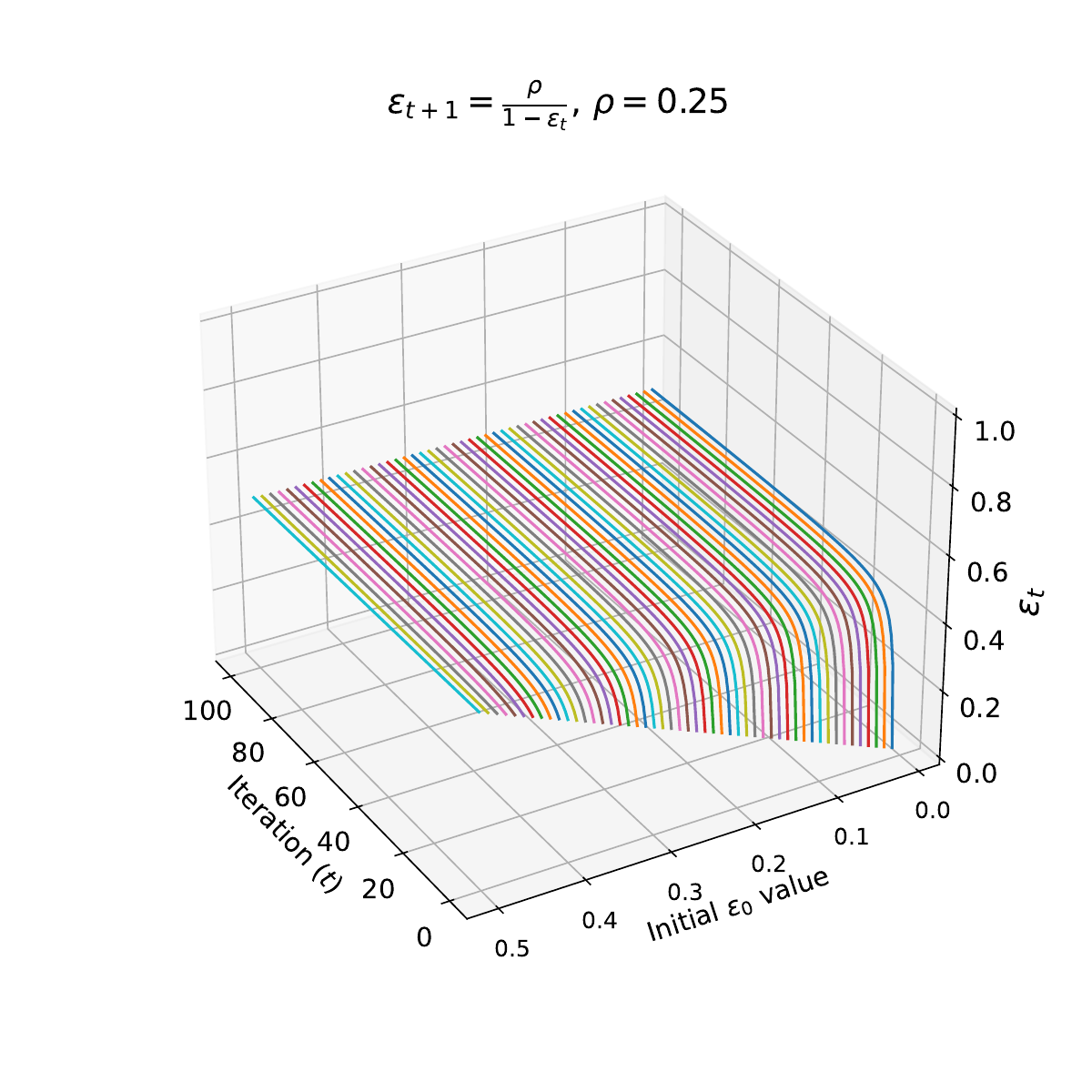}}
\label{fig:rho_0.25}
\caption{Visualization of $\epsilon_t$ in 100 iteration with different $\rho$ and different initial $\epsilon_t$.}
\label{fig:rho_epsilon}
\end{figure}

\section{CDD Optimization Objectives.}\label{app:add_optobj}
We present a detailed description of several optimization objectives for our CDD approach.

\textbf{CDD with DM.} First, we describe the adaptation of DM~\cite{zhao2023dataset} as the optimization objective for CDD, as outlined below,
\begin{equation}\label{equ:dm_dist}
    \mathop{\mathbb{E}}\limits_{\theta\sim\theta_{1:N}}\left[\ell_{DM}(\mathcal{S}_t,\mathcal{R}_t;\theta)\right]=\mathop{\mathbb{E}}\limits_{\theta\sim\theta_{1:N}}\left[\Vert\mathbb{E}[\psi(\mathcal{S}_t;\theta)]-\mathbb{E}[\psi(\mathcal{R}_t;\theta)]\Vert^2\right],
\end{equation}
where $N$ in Eq.~\ref{equ:dm_dist} represents the total number of cached checkpoints $\theta$ in the history of learning task-$t$,
$\psi(\cdot;\theta)$ is the function to extract the embedding of the input samples. The objective of Eq.~\ref{equ:dm_dist} aims to measure the distance between the mean vector of the synthetic exemplars and the mean vector of the real dataset.

\textbf{CDD with FTD.} We also adapt the FTD~\cite{du2023minimizing} as the optimization objective for CDD as follows,
\begin{align}\label{equ:ftd_dist}
\mathop{\mathbb{E}}\limits_{\theta\sim\theta_{1:N}}\left[\ell_{FTD}(\mathcal{S}_t,\{\mathcal{R}_t,\mathcal{H}_{1:t-1}\};\theta)\right] & = \mathop{\mathbb{E}}\limits_{\theta\sim\theta_{1:N}}\frac{\Vert(\theta_{S}^{(n)}-\theta^{(m)})\Vert^2_2}{\Vert(\theta-\theta^{(m)})\Vert^2_2}, \nonumber \\
    \theta_{S}^{(0)}=\theta,~\theta_{S}^{(n)} = \theta_{S}^{(n-1)} & +\eta\nabla L(\mathcal{S}_{t};\theta_{S}^{(n-1)}), \nonumber \\
    \theta^{(0)}=\theta,~\theta^{(m)} = \theta^{(m-1)} & +\eta\nabla L(\{\mathcal{R}_t,\mathcal{H}_{1:t-1}\};\theta^{(m-1)}),
\end{align}
where $N$ in Eq.~\ref{equ:ftd_dist} represents the total number of cached checkpoints $\theta$ in the history of learning task-$t$. The parameter vectors $\theta_{S}^{(n)}$ and $\theta^{(m)}$, which originate from the initial model parameters $\theta$, are iteratively trained on the synthetic dataset $\mathcal{S}_t$ and the data $\mathcal{R}_t\cup\mathcal{H}_{1:t-1}$ of the current task, respectively, for $n$ and $m$ steps with learning rate $\eta$. The function $L(\cdot; \cdot)$ is the loss function used for training the network across different CIL methods. The objective in Eq.~\ref{equ:ftd_dist} focuses on quantifying the difference between two sets of parameter vectors: $\theta_{S}^{(n)}$ and $\theta^{(m)}$. Here $\theta_{S}^{(n)}$ is derived from the initial parameters $\theta$, which are trained with synthetic exemplars $\mathcal{S}_t$ for $n$ steps, while $\theta^{(m)}$ originates from the same initial parameters but is trained with real data $\mathcal{R}_t \cup \mathcal{H}_{1:t-1}$ for $m$ steps. 

\textbf{CDD with DSA.} Additionally, we also adapt the DSA~\cite{zhao2021dataset} as the optimization objective for CDD as follows,
\begin{equation}\label{equ:dsa_dist}
     \mathop{\mathbb{E}}\limits_{\theta\sim\theta_{1:N}}\left[\ell_{DSA}(\mathcal{S}_t,\mathcal{R}_t;\theta)\right]=\mathop{\mathbb{E}}\limits_{\theta\sim\theta_{1:N}}\frac{<\nabla L(\mathcal{S}_{t};\theta),\nabla L(\mathcal{R}_{t};\theta)>}{\Vert\nabla L(\mathcal{S}_{t};\theta)\Vert_2,\Vert\nabla L(\mathcal{R}_{t};\theta)\Vert_2},
\end{equation}
where $N$ in Eq.~\ref{equ:dsa_dist} represents the total number of cached checkpoints $\theta$ in the history of learning task-$t$. The function $L(\cdot; \cdot)$ is the loss function used for training the network across different CIL methods.
The objective in Eq.~\ref{equ:dsa_dist} focuses on quantifying the cosine similarity between the gradient of the synthetic exemplars $\mathcal{S}_{t}$ and the gradient of the real dataset $\mathcal{R}_{t}$, both computed on the sampled parameters $\theta$ with the loss function $L(\cdot; \cdot)$.

\textbf{CDD with DataDAM.} Furthermore, we adapt the DataDAM~\cite{sajedi2023datadam} as the optimization objective for CDD as follows,
\begin{equation}\label{equ:dam_dist}
\mathop{\mathbb{E}}\limits_{\theta\sim\theta_{1:N}}\left[\ell_{DataDAM}(\mathcal{S}_t,\mathcal{R}_t;\theta)\right]=\mathop{\mathbb{E}}\limits_{\theta\sim\theta_{1:N}}\left[L_{SAM}(\mathcal{S}_t,\mathcal{R}_t;\theta)+\lambda L_{MMD}(\mathcal{S}_t,\mathcal{R}_t;\theta)\right],
\end{equation}
where $N$ in Eq.~\ref{equ:dsa_dist} represents the total number of cached checkpoints $\theta$ in the history of learning task-$t$.
The objective in Eq.~\ref{equ:dam_dist} focuses on attention matching and quantifying the distributions between synthetic exemplars and real datasets by objectives $L_{SAM}(\cdot,\cdot;\cdot)$ and $L_{MMD}(\cdot,\cdot;\cdot)$.

\begin{wrapfigure}[19]{r}{0.5\textwidth}
\vspace{-0.2in}
  \begin{center}
    \includegraphics[width=0.5\textwidth]{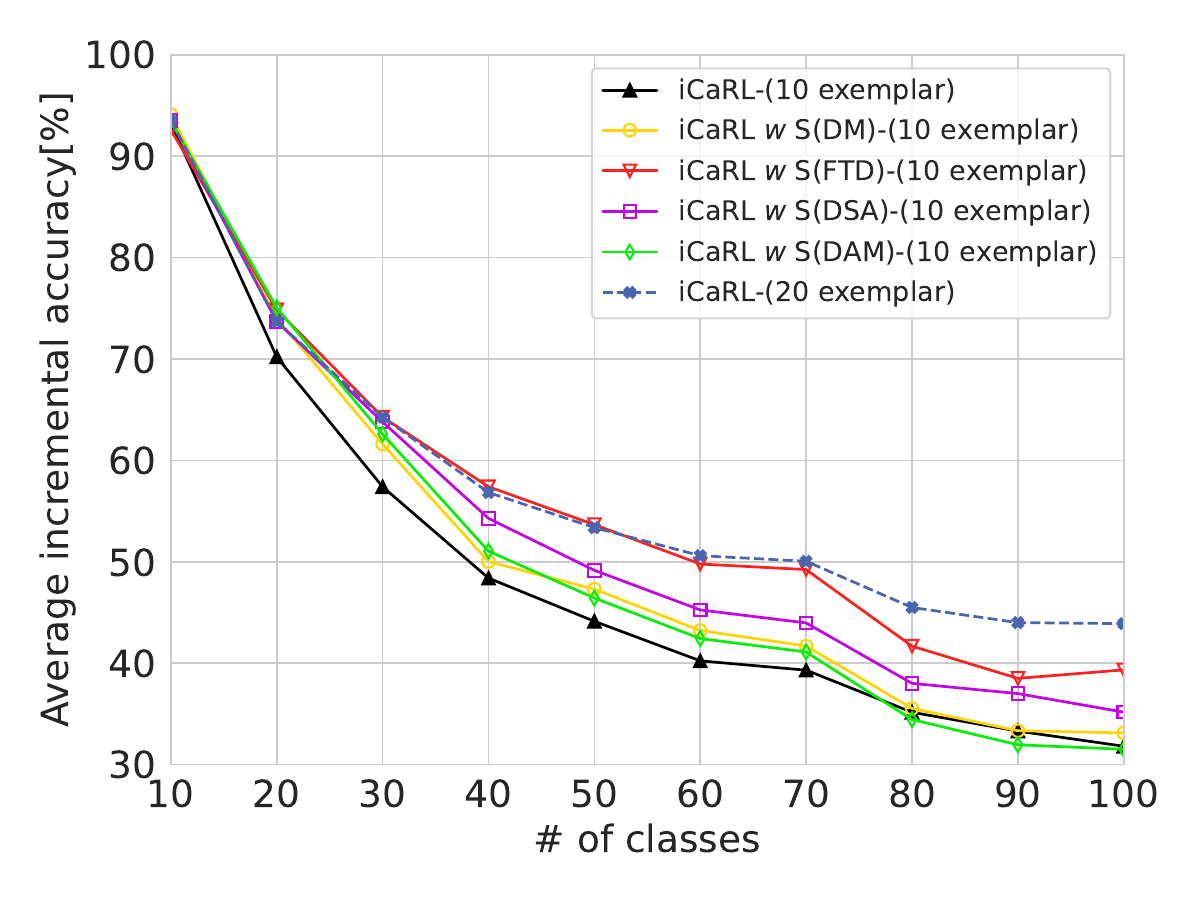}
  \end{center}
  \vspace{-0.1in}
  \caption{Results of different CDD optimization objectives on CIFAR-100 with zero-base 10 phases setting. S$(\cdot)$ means synthetic memory generated by a specific objective.}
  \label{fig:add_compare}
\end{wrapfigure}

Furthermore, we validate the wild applicability of our CDD in adapting the data distillation methods into CIL. In this experiment, we adapt different DD algorithms, DM, FTD, DSA, and DataDAM above as CDD optimization objectives. These objectives with CDD are integrated into iCaRL, and their performance is evaluated on the CIFAR-100 dataset with zero-base 10 phases setting. In the experiment, each CDD is used to generate \textbf{10} synthetic exemplars per class, denoted as $S(DM)$, $S(FTD)$, $S(DSA)$, and $S(DataDAM)$. These are compared against the original iCaRL method, which selects 10 and 20 real exemplars per class using \textit{herding}\cite{welling2009herding}, as described in\cite{rebuffi2017icarl}.
As illustrated in Fig.~\ref{fig:add_compare}, iCaRL with 10 synthetic exemplars per class outperforms the original method which uses 10 real exemplars per class (black curve). In addition, using just 10 synthetic exemplars per class as $S(FTD)$ achieves comparable performance to iCaRL configured with 20 real exemplars per class (blue dash curve). 

\begin{figure}[!htb]
\vspace{-0.15in}
\centering
\subfigure[Performance of FOSTER]{
\includegraphics[width=0.49\textwidth]{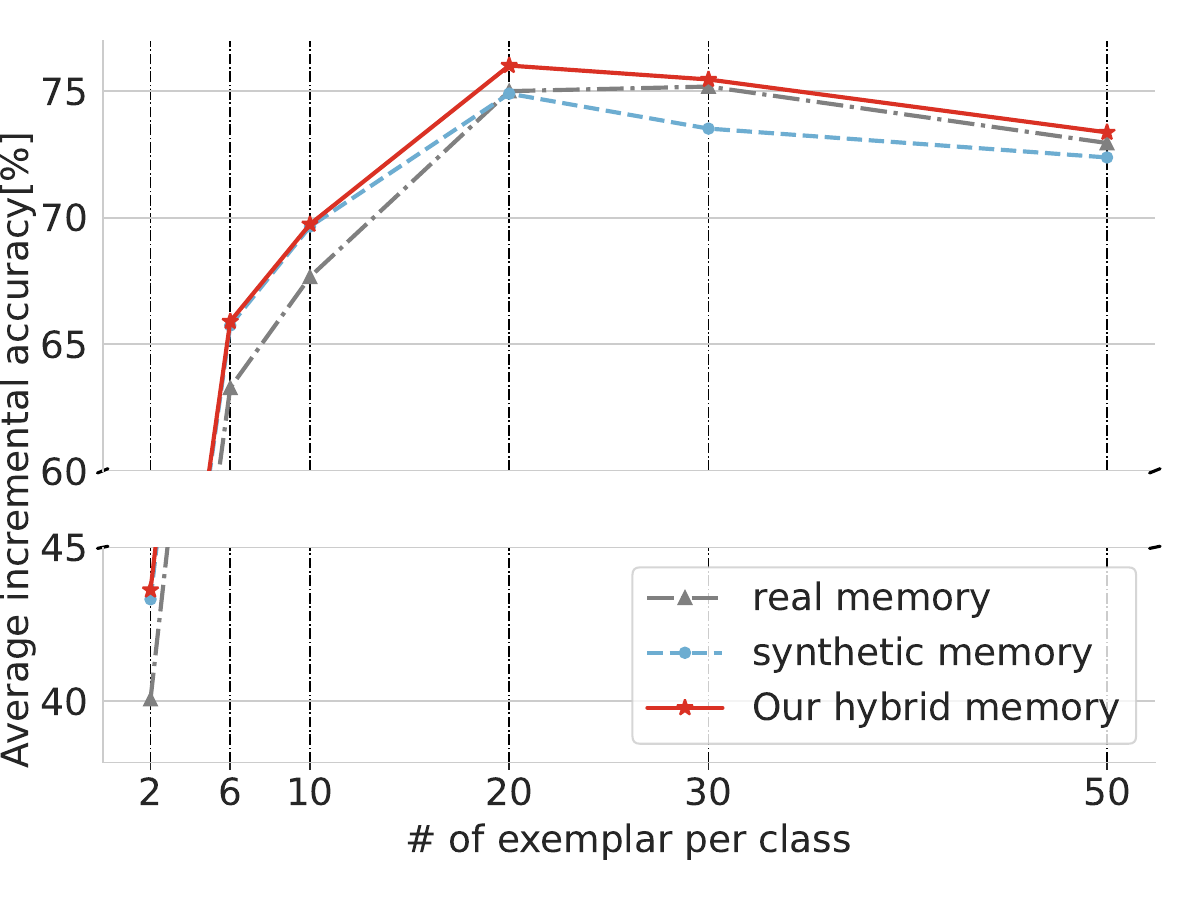}}
\label{fig:across_foster}
\subfigure[Performance of BEEF]{
\includegraphics[width=0.49\textwidth]{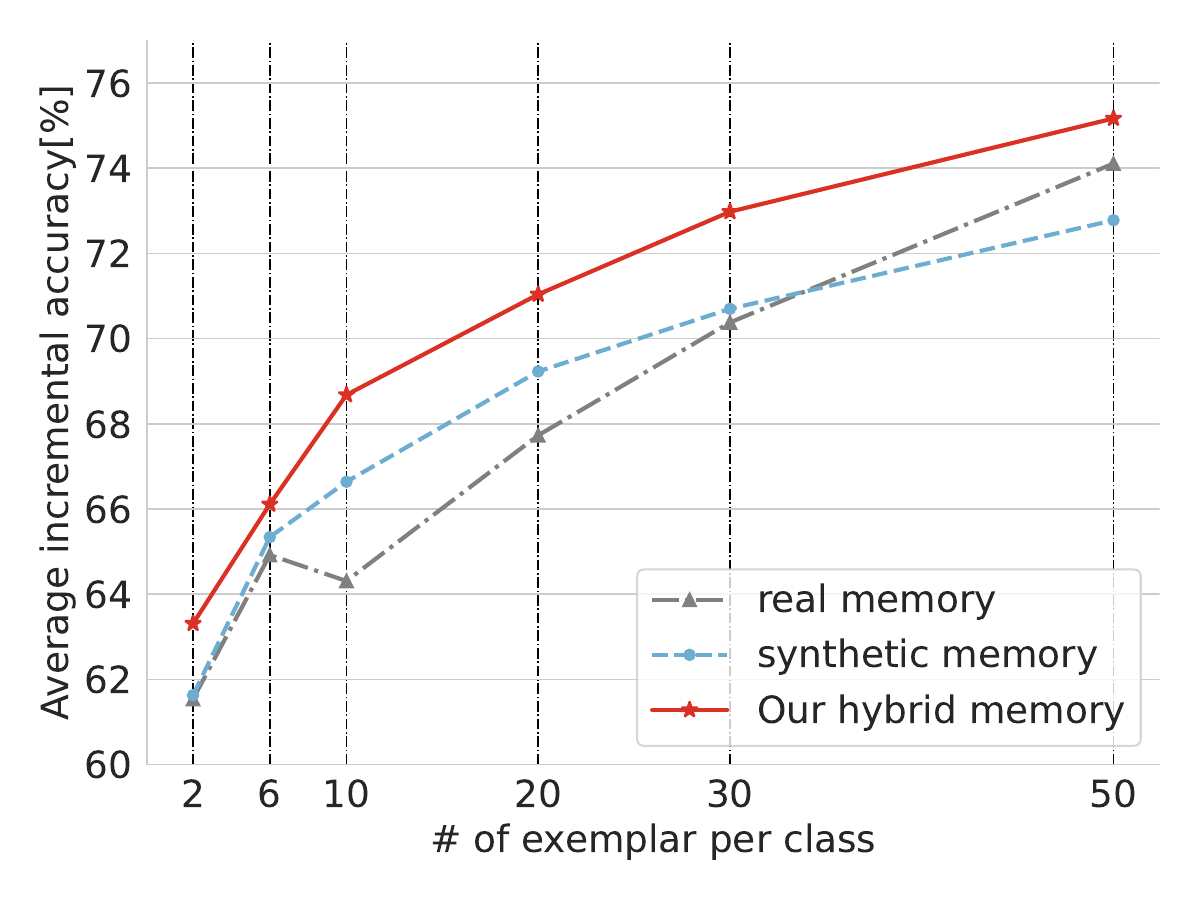}}
\label{fig:across_beef}
\caption{Performance evaluations of FOSTER and BEEF across different exemplar buffer sizes for real memory, synthetic memory, and our hybrid memory. ``real memory'' refers to buffers containing only real exemplars selected by iCaRL. ``synthetic memory'' contains only synthetic exemplars generated using CDD.}
\label{fig:app_dm_across}
\end{figure}

\section{Effectiveness of Hybrid Memory Across Different Exemplar Buffer Sizes}\label{app:dm_across}
To further validate the assumption from Sec.~\ref{sec:motivation} that synthetic exemplars alone are not always sufficient, we explore the effectiveness of the hybrid memory across different exemplar buffer sizes with several different replay-based CIL methods. In addition to the iCaRL results shown in Fig.~\ref{fig:motivation}, we compare the performance of hybrid memory with both synthetic and real memory on two other replay-based CIL methods, BEEF and FOSTER, across various buffer sizes. The experiments were conducted on CIFAR-100 under the zero-base 10-phase setting, the same setting used for iCaRL in Fig.~\ref{fig:motivation}.

As shown in Fig.~\ref{fig:app_dm_across}, the performance of synthetic memory generated by CDD significantly improves at smaller buffer sizes but quickly diminishes as the number of synthetic exemplars increases in both FOSTER and BEEF. In contrast, our proposed hybrid memory leverages the strengths of both synthetic and real exemplars, consistently outperforming synthetic and real memory across different exemplar buffer sizes.

\begin{figure*}[!ht]
\vspace{-0.15in}
\centering
\subfigure[CIFAR-zero-5]{
\includegraphics[width=0.22\textwidth]{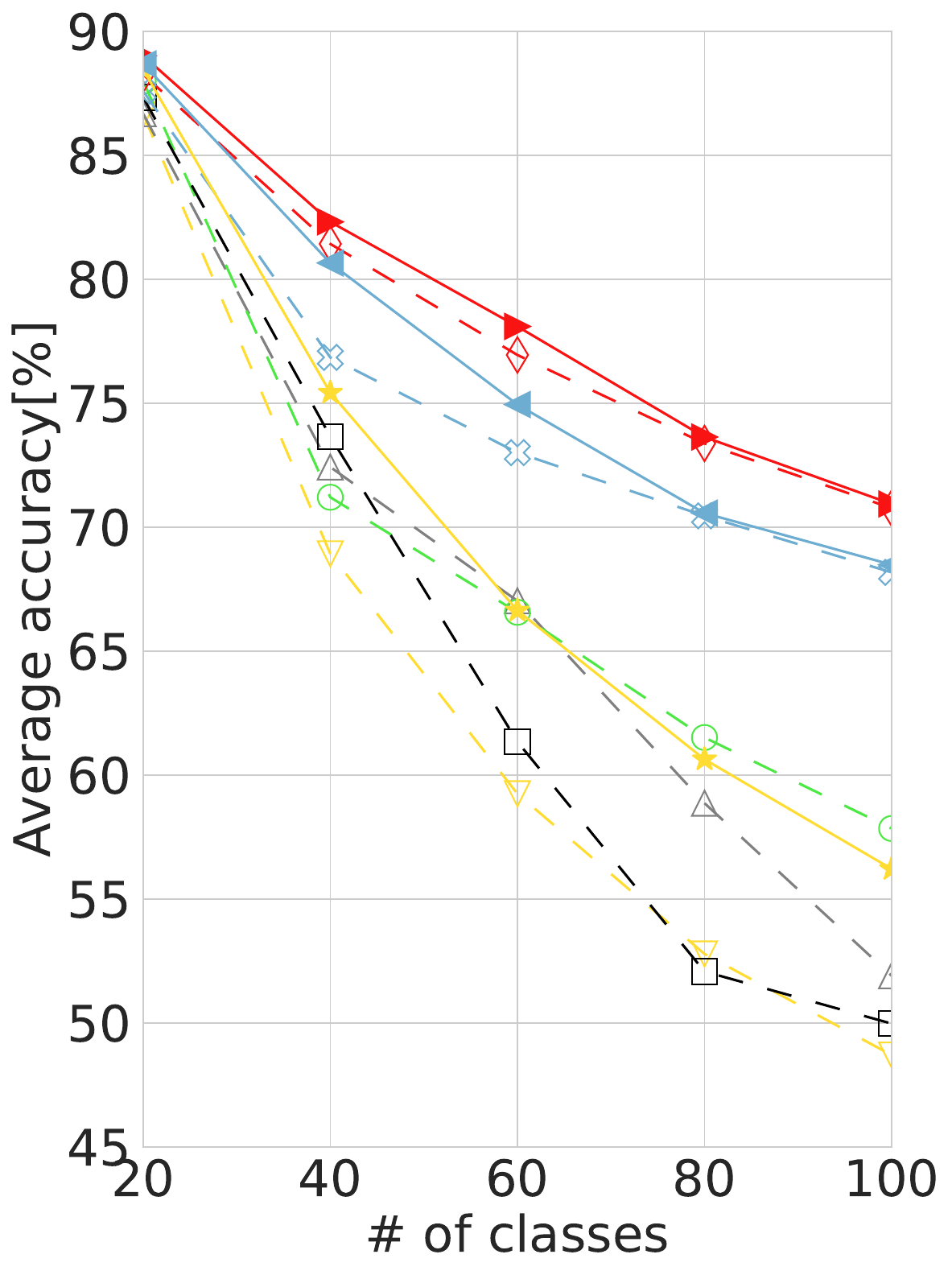}}
\label{fig:cifar-b0-5}
\subfigure[CIFAR-zero-10]{
\includegraphics[width=0.22\textwidth]{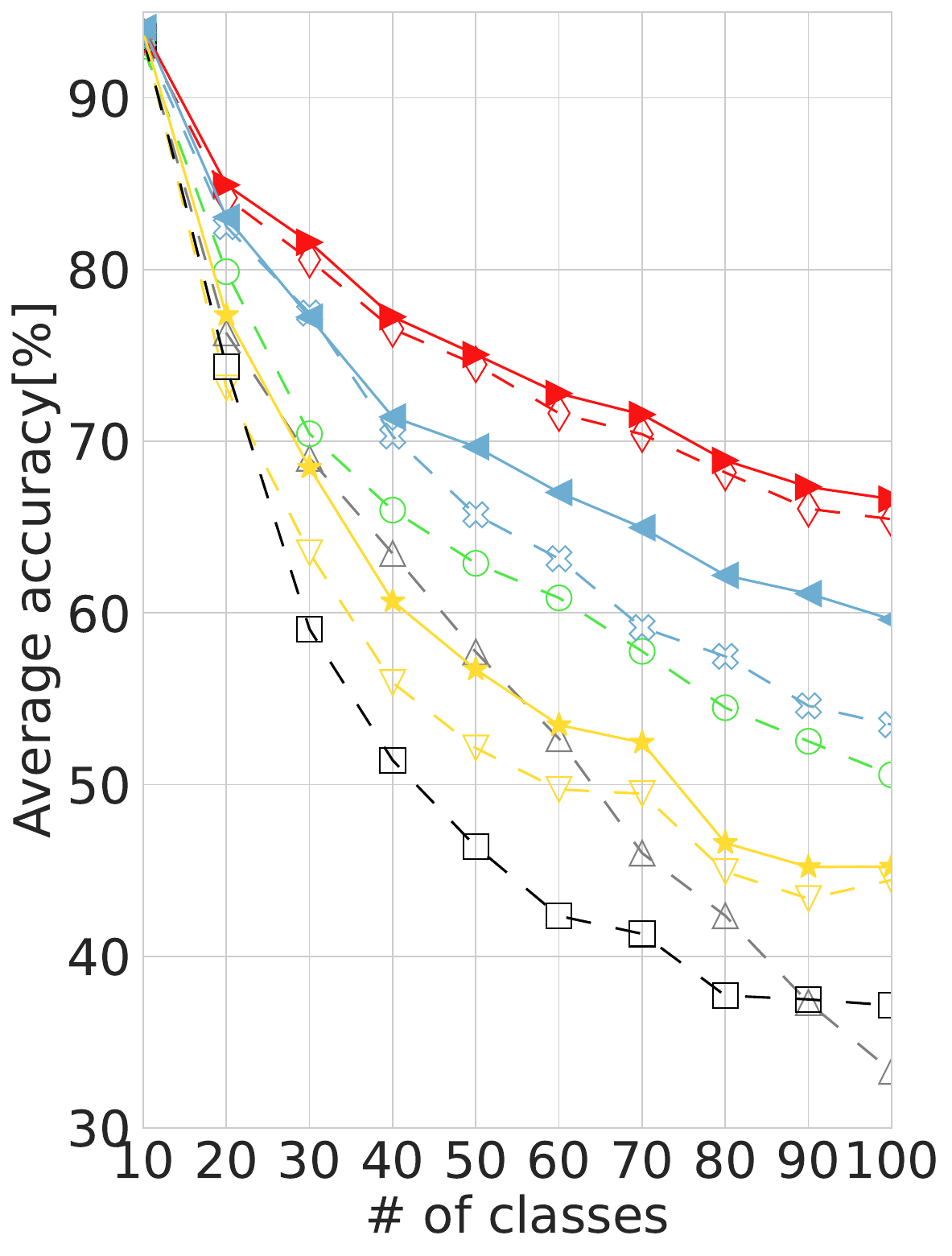}}
\label{fig:cifar-b0-10}
\subfigure[Tiny-zero-5]{
\includegraphics[width=0.22\textwidth]{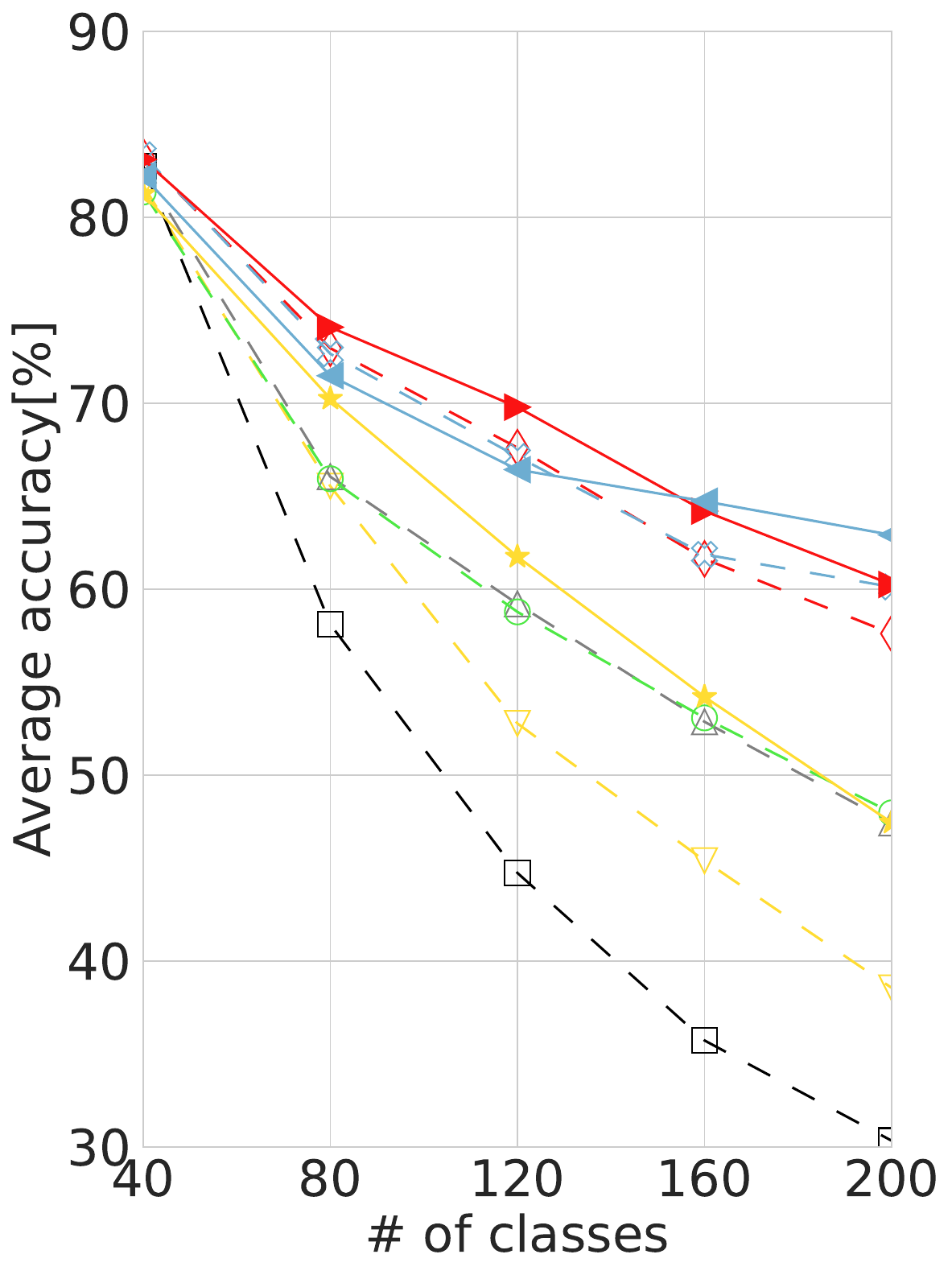}}
\label{fig:tiny-b0-5}
\subfigure[Tiny-zero-10]{
\includegraphics[width=0.295\textwidth]{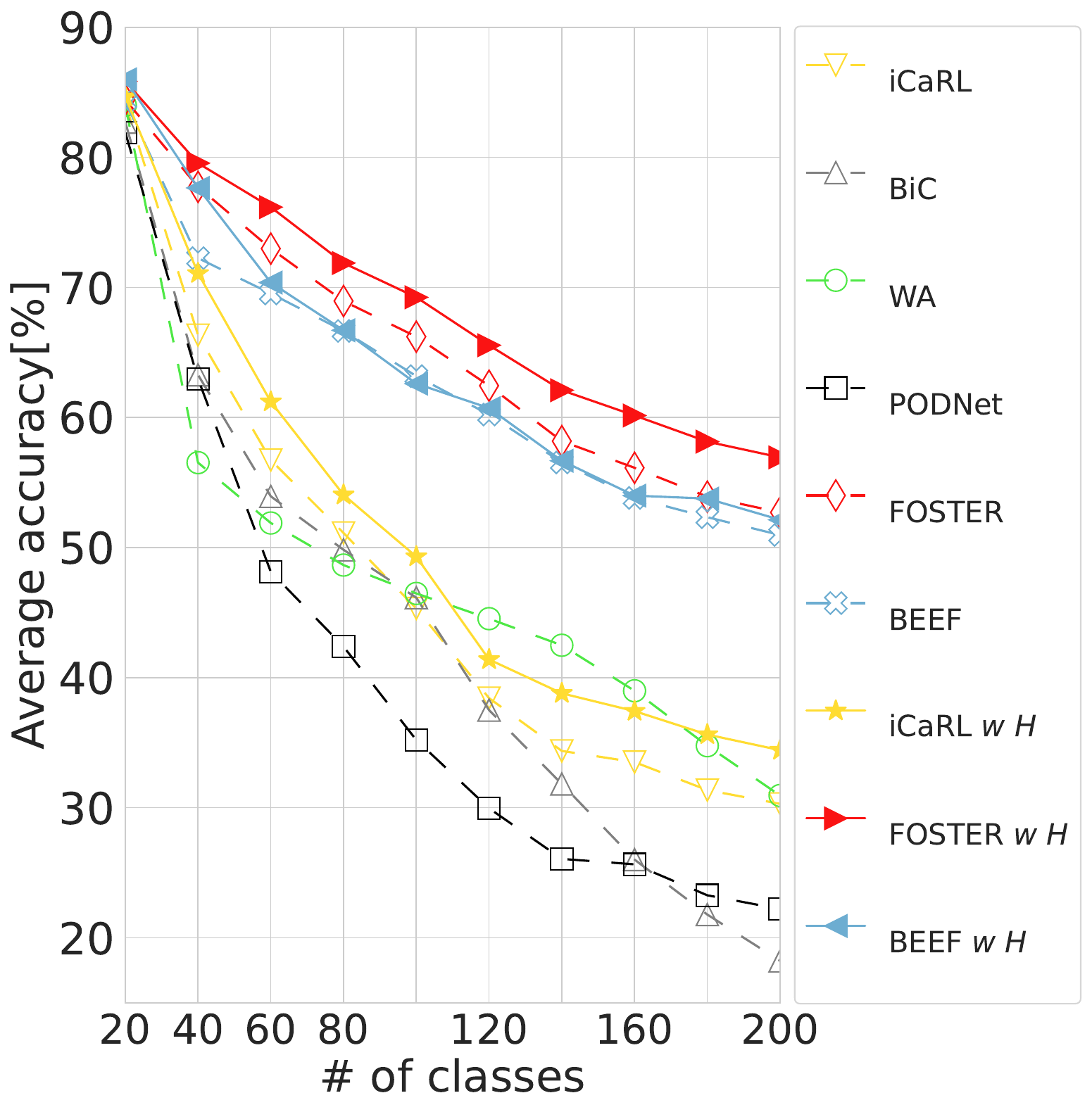}}
\label{fig:tiny-b0-10}
\subfigure[CIFAR-half-5]{
\includegraphics[width=0.22\textwidth]{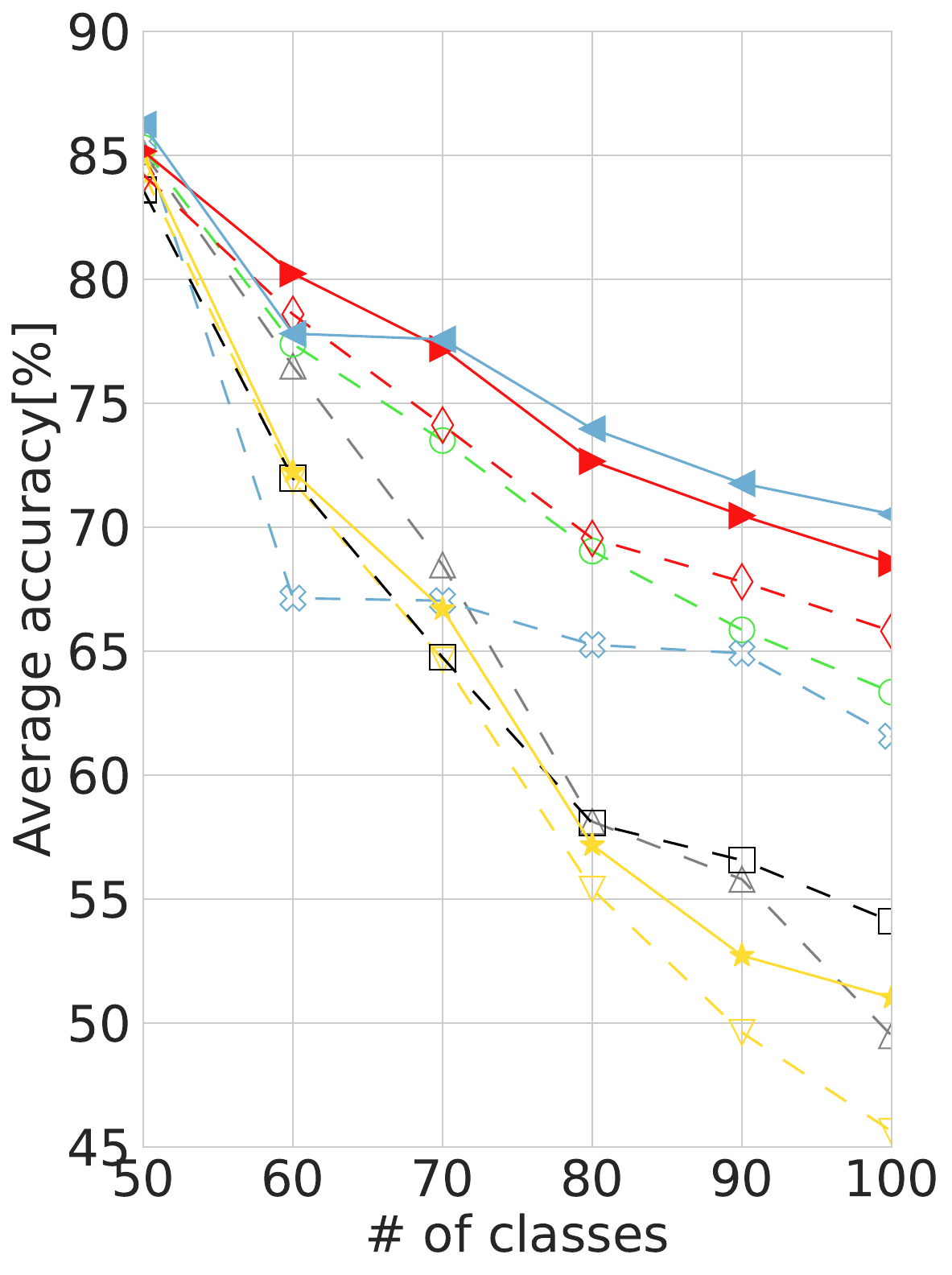}}
\label{fig:cifar-hf-5}
\subfigure[CIFAR-half-10]{
\includegraphics[width=0.22\textwidth]{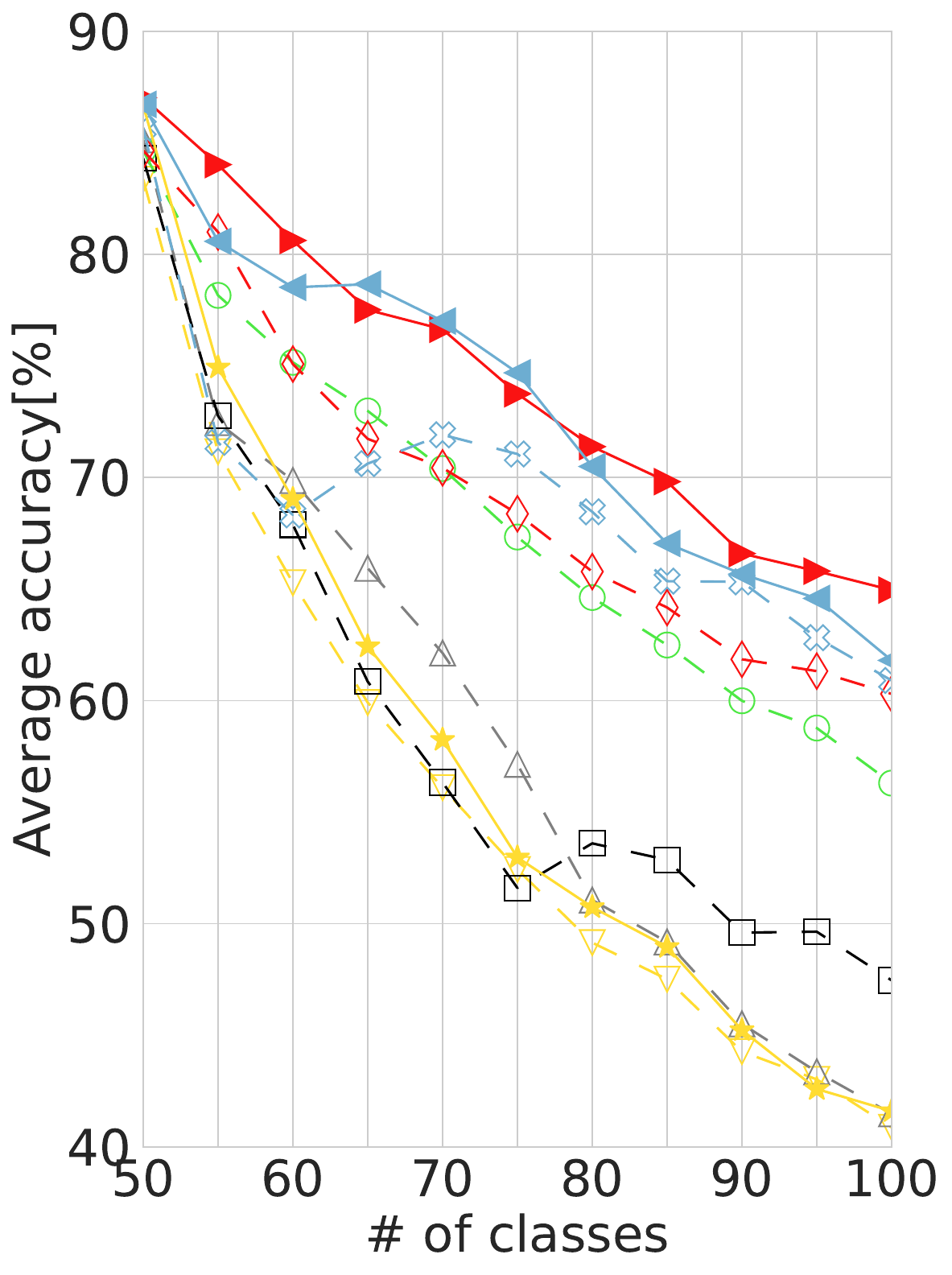}}
\label{fig:cifar-hf-10}
\subfigure[Tiny-half-5]{
\includegraphics[width=0.22\textwidth]{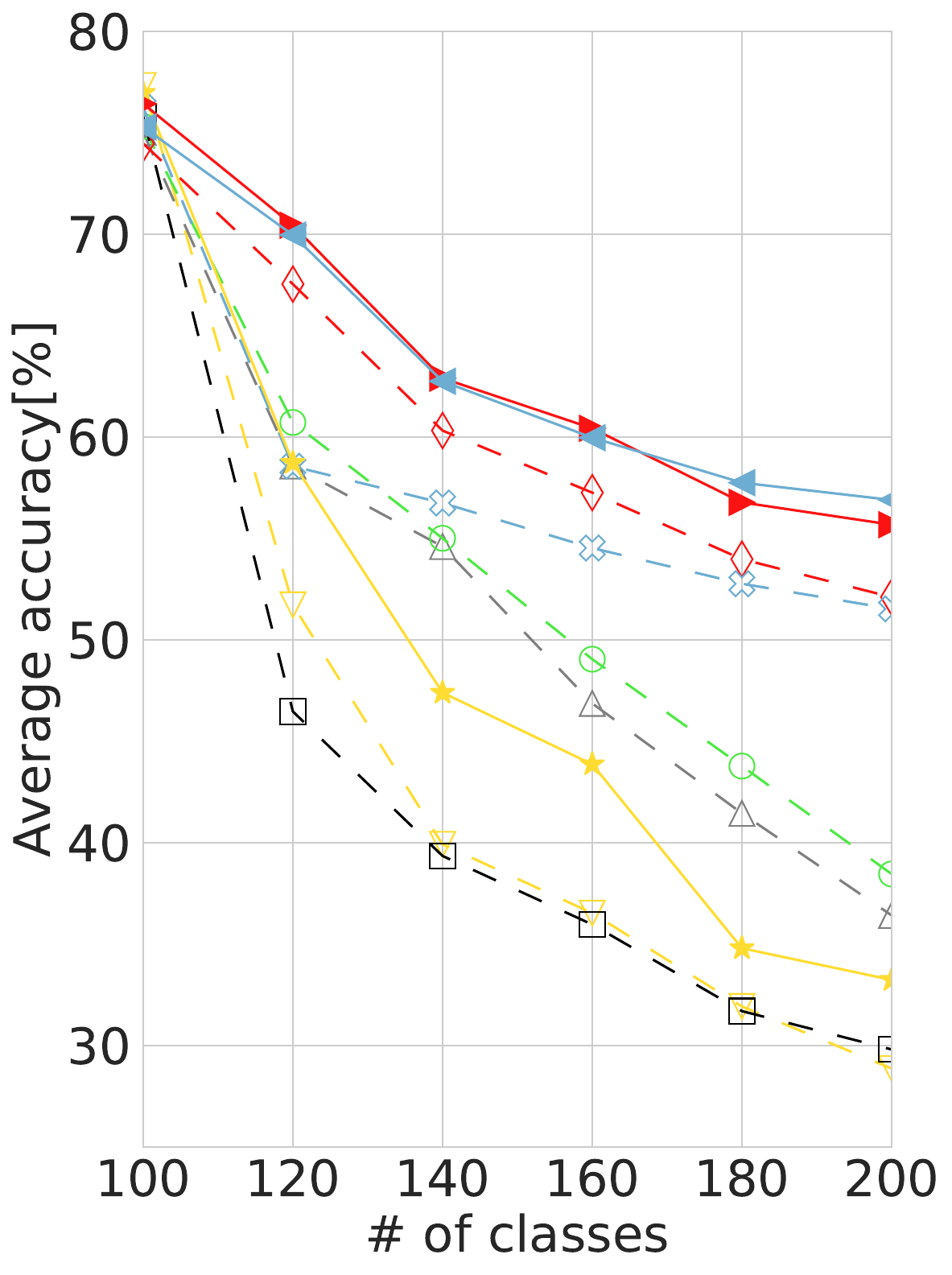}}
\label{fig:tiny-hf-5}
\subfigure[Tiny-half-10]{
\includegraphics[width=0.295\textwidth]{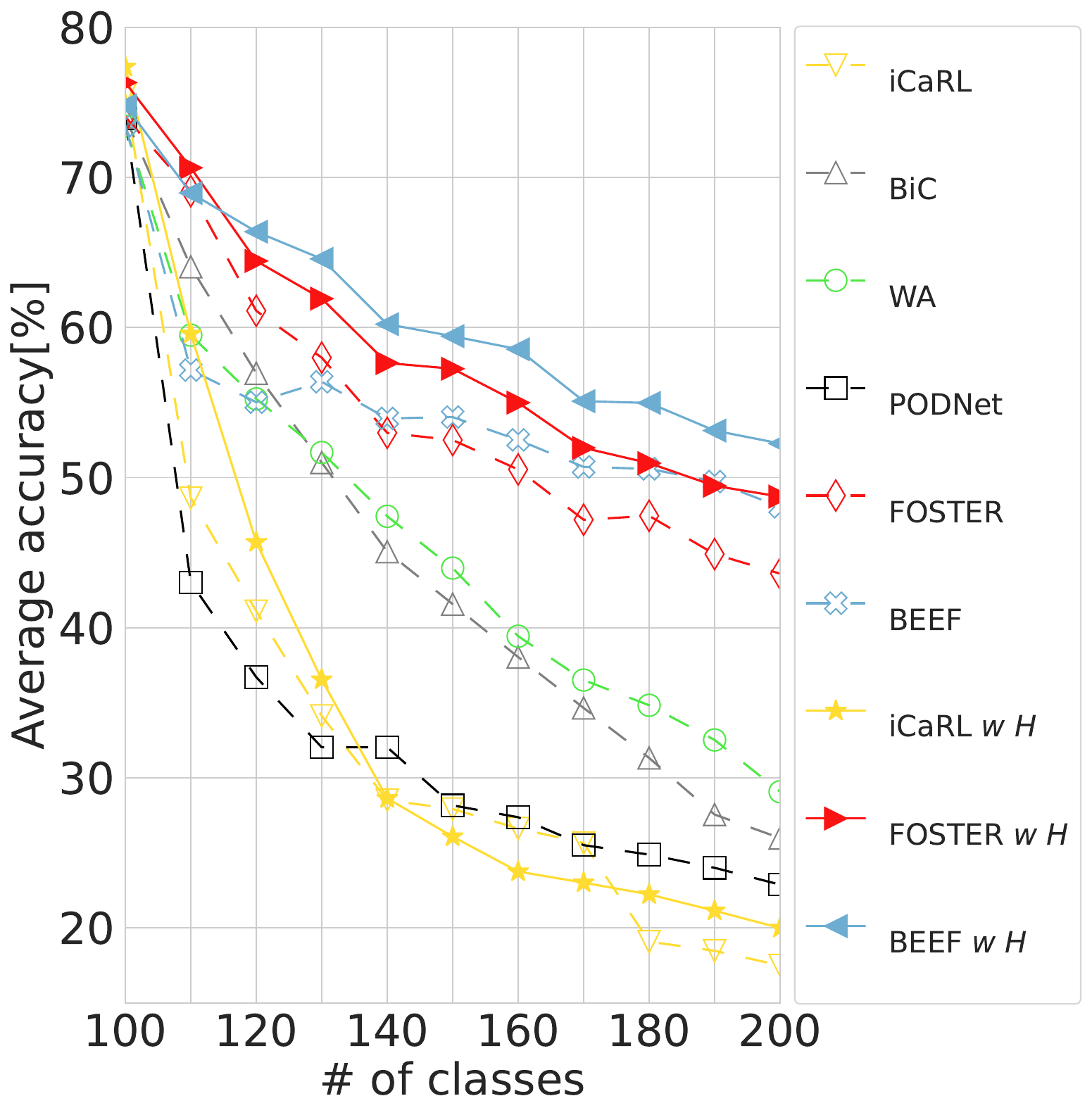}}
\label{fig:tiny-hf-10}
\vspace{-0.15in}
\caption{Accuracy comparison of different methods on CIFAR-100 and TinyImageNet under different settings. It can be observed that our hybrid memory can improve the performance of repla-based methods in terms of accuracy. ``zero-5,10" means ``zero-base-5,10 phases" setting and ``half-5,10" means ``half-base-5,10 phases" setting.}
\label{fig:sota_cifar_tiny}
\vspace{-0.1in}
\end{figure*}

\begin{figure*}[!ht]
\vspace{-0.15in}
\centering
\subfigure[Synthetic Exemplars (CIFAR-100)]{
\includegraphics[width=0.48\textwidth]{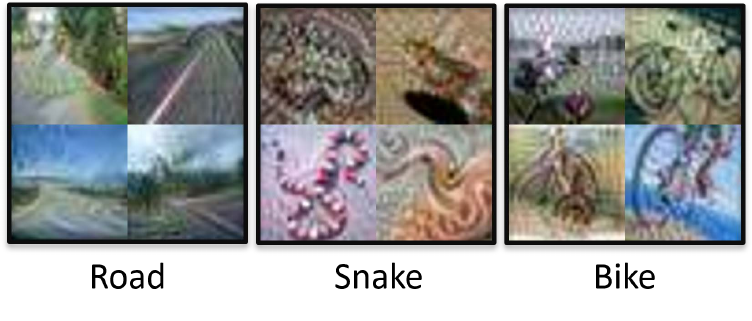}}
\label{fig:dm_visual_cifar_s}
\subfigure[Synthetic Exemplars (TinyImageNet)]{
\includegraphics[width=0.48\textwidth]{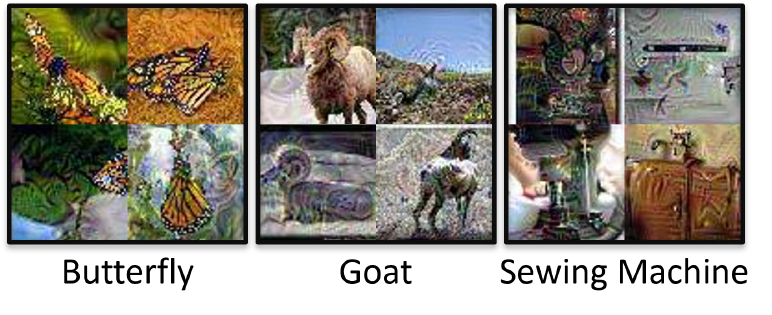}}
\label{fig:dm_visual_tiny_s}
\subfigure[Select Exempalrs (CIFAR-100)]{
\includegraphics[width=0.48\textwidth]{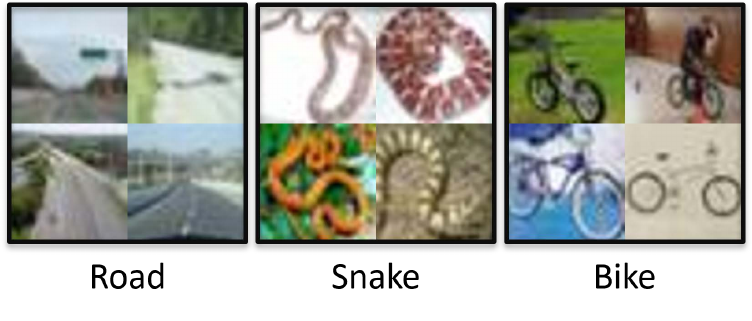}}
\label{fig:dm_visual_cifar_r}
\subfigure[Select Exempalrs (TinyImageNet)]{
\includegraphics[width=0.48\textwidth]{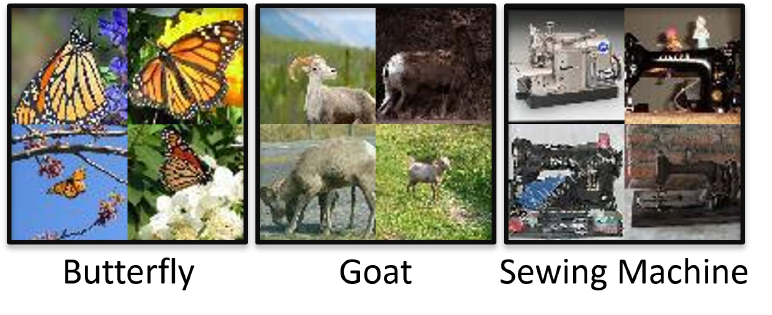}}
\label{fig:dm_visual_tiny_r}
\caption{Visualization of select exemplars and synthetic exemplars, the latter generated by CDD with DM, from CIFAR-100 and TinyImageNet.}
\label{fig:dd_visual_compare}
\vspace{-0.15in}
\end{figure*}

\section{Additional Results}\label{app:add_result}
To demonstrate the superior performance of our approach, we also compare the average accuracy curves for each task against the baselines. As described in Sec.~\ref{subsec:main_result}, we here present the average accuracy (AA) for each task corresponding to Table~\ref{tabel:sota_zero} and Table~\ref{tabel:sota_half}. As shown in Fig.~\ref{fig:sota_cifar_tiny}, a comparison between the solid curves (representing methods with our proposed hybrid memory) and the dashed curves of the same color (representing the original methods) shows that our hybrid memory consistently enhances the performance of replay-based CIL methods, even with a limited exemplar buffer size, across different dataset scales and experimental settings.

\section{Visualization}\label{app:visualize}

In this section, we provide visualizations of the synthetic and selected exemplars from CIFAR-100 and TinyImageNet. As shown in Fig.~\ref{fig:dd_visual_compare}, we randomly select three different classes and compare the synthetic exemplars generated by our CDD with the corresponding selected exemplars conditioned on these synthetic exemplars. By comparing the upper and bottom rows, we observe that the selected exemplars are chosen based on the given synthetic exemplars and provide complementary information that the synthetic exemplars alone do not capture.

\end{document}